\newcommand{\name}{MEOW}
\newcommand{\nameplus}{AdaMEOW}
\newtheorem{definition}{Definition}
\newtheorem{theorem}{Theorem}
\newcommand{\yjx}[1]{{\color{black}#1}}
\begin{document}

\title{Heterogeneous Graph Contrastive Learning 
with Meta-path Contexts and 
Adaptively
Weighted Negative Samples}

\author{Jianxiang~Yu, Qingqing~Ge, Xiang~Li, Aoying Zhou
        % <-this % stops a space
\thanks{Jianxiang Yu, Qingqing Ge, Xiang Li and Aoying Zhou are with the School
of Data Science and Engineering, East China Normal University, Shanghai. 
Xiang Li is the corresponding author. \\
E-mail: \{jianxiangyu,qingqingge\}@stu.ecnu.edu.cn,
\\
\{xiangli, ayzhou\}@dase.ecnu.edu.cn
}% <-this % stops a space
}

% The paper headers
\markboth{Journal of \LaTeX\ Class Files,~Vol.~14, No.~8, August~2021}%
{Shell \MakeLowercase{\textit{et al.}}: A Sample Article Using IEEEtran.cls for IEEE Journals}

% \IEEEpubid{0000--0000/00\$00.00~\copyright~2021 IEEE}
% Remember, if you use this you must call \IEEEpubidadjcol in the second
% column for its text to clear the IEEEpubid mark.

\maketitle

\begin{abstract}
Heterogeneous graph contrastive learning has received wide attention recently. 
Some existing methods use meta-paths, which are sequences of object types that capture semantic relationships between objects, 
to construct contrastive views. 
However,
most of them ignore the rich meta-path context information that describes how two objects are connected by meta-paths. 
Further,
they fail to distinguish negative samples, 
which could adversely affect the model performance.
To address the problems,
we propose~\name, 
which considers both meta-path contexts and weighted negative samples. 
Specifically,
\name~constructs a coarse view and a fine-grained view for contrast. 
The former reflects which objects are connected by meta-paths,
while the latter uses meta-path contexts and characterizes details on how the objects are connected. 
Then,
we theoretically analyze the InfoNCE loss and recognize its limitations for computing gradients of negative samples. 
To better distinguish negative samples, 
we learn hard-valued weights for them based on node clustering and use prototypical contrastive learning to pull close embeddings of nodes in the same cluster. 
In addition, we propose a variant model~\nameplus~that adaptively learns soft-valued weights of negative samples to further improve node representation. 
Finally, we conduct extensive experiments to show the superiority of~\name~and~\nameplus~against other state-of-the-art methods.

\end{abstract}

\begin{IEEEkeywords}
Heterogeneous information network, contrastive learning, representation learning.
\end{IEEEkeywords}

\section{Introduction}
\IEEEPARstart{H}{eterogeneous} information networks (HINs) are 
prevalent in the real world,
% highly correlated with real-world datasets 
such as social networks, citation networks, and knowledge graphs. 
In HINs, 
nodes (objects) are of different types to represent entities and edges (links) are also of multiple types to characterize various relations between entities.
For example,
in \emph{Facebook},
we have entities like \emph{users}, \emph{posts}, \emph{photos} and \emph{groups};
{users} can \emph{publish} {posts},
\emph{upload} photos and \emph{join} groups.
% the research on them is of great significance.
Compared with homogeneous graphs
where all the nodes and edges are of a single type,
HINs contain
% Heterogeneous information network (HIN) includes different node types and edge types, which show 
richer semantics and more complicated structural information.
To further enrich the information of HINs,
nodes are usually associated with labels.
Since object labeling is costly,
graph neural networks (GNNs)
\cite{li2021leveraging,han,fu2020magnn} have recently been applied for classifying nodes in HINs and have shown to achieve superior performance.

% Some emerging methods have been able to effectively mine the information of heterogeneous graphs to perform the typical downstream task: node classification.
% Heterogeneous Attention Network (HAN)
% \cite{wang2019heterogeneous}
% is proposed to aggregate valuable information using node-level and semantic-level attention mechanisms.

Despite the success,
most existing heterogeneous graph neural network (HGNN) models 
% that many methods perform well on heterogeneous information graphs, such methods 
require a large amount of training data, which is difficult to obtain.
To address the problem,
self-supervised learning,
which is in essence unsupervised learning,
% provides a new direction for exploration. 
has been applied in HINs 
\cite{park2022cross, hwang2020self}.
% would be time-consuming and labor-intensive, thus we want to introduce self-supervised learning to obtain high-quality representations even without large-scale labeled data.
The core idea of self-supervised learning is to
extract supervision from data itself and learn high-quality representations with strong generalizability for downstream tasks.
In particular,
contrastive learning,
as one of the main self-supervised learning types,
has recently received significant attention.
% has been well studied on graph data.
Contrastive learning
aims to construct 
positive and negative pairs for contrast,
following the principle of
% These methods follow the 
% core idea is to 
maximizing the mutual
information (MI)~\cite{oord2018representation} between positive pairs while minimizing that
between negative pairs.
% Contrastive learning has been well studied on homogeneous graphs.
% For example,
Although some graph contrastive learning methods for HINs have already been proposed~\cite{park2022cross,jiang2021contrastive,hwang2020self},
most of them suffer from the following two main challenges: \emph{contrastive view construction} and \emph{negative sample selection}.

% Inspired by SimCLR
% \cite{chen2020simple} 
% GraphCL 
% \cite{https://doi.org/10.48550/arxiv.2007.08025}
% perturbs the $L$-hop subgraph of the nodes with two different random perturbations as two different contrastive views and treats the other nodes in the minibatch as negative samples.
% Although homogeneous graphs have been studied using generative and contrastive learning methods, self-supervised learning has not been sufficiently worked on heterogeneous graphs.

% The self-supervised heterogeneous graph neural networks mainly suffer from two following challenges:
On the one hand, 
to construct contrastive views,
some methods
utilize 
meta-paths~\cite{zhao2021multi,heco}.
% to capture various semantic relations between objects in HINs.
% which capture the semantic relations between objects in HINs,
% have been 
% used to construct contrastive views in HINs.
A meta-path,
which is a sequence of object
types, 
captures the semantic relation between objects in
HINs. 
For example, 
if we denote the object types \emph{User}
and \emph{Group} in Facebook as ``U'' and ``G'', respectively, 
the meta-path \emph{User-Group-User} (UGU) expresses the co-participation
relation. 
Specifically, 
two users $u_1$ and $u_2$ are UGU-related if a path instance $u_1-g-u_2$ exists, where $g$ is a group
object and describes 
the \emph{contextual} information on how $u_1$ and $u_2$ are connected.
% the edge between $u_1$ ($u_2$) 
% and $g$ represents participation. 
The use of
meta-paths can identify a set of path-based neighbors
that are semantically related to a given object
and provide different views for contrast.
However,
existing contrastive learning methods omit the contextual information 
% only perform 
% coarse contrast 
in each meta-path view.
For example,
% the representative model
HeCo~\cite{heco} 
takes meta-paths as views, but it 
only uses 
the fact that two objects are connected by meta-paths
and
discards the contexts of 
how 
they are semantically 
connected,
which we will call \emph{meta-path contexts}
and
% The contextual information of meta-paths
can be very influential in the classification task. For example,
a group can provide valuable hints
on a user's topic interests.
% provide important information
% of t of the authors. 
Therefore,
% constructing and leveraging contrastive 
contrasting meta-path views with rich contexts is a necessity.

% \cite{heco} chooses to design the contrastive mechanism using the meta-path structure and network schema as two views, but it only focuses on the feature information of first-order neighbors and ignores the context information of higher-order neighbors in the network view that focuses on meta-path instances.
% Therefore, we need to design the views to contain information about the structure and meta-path instances and to simplify the contrastive mechanism.

% For example,
% how to design effective contrastive mechanisms and meaningful \li{views} to integrate multiple types of nodes and edges.
% \li{
% Since the information of heterogeneous graphs is rich and complex, it is necessary to filter out redundant information.
% coarse and fine-grained}
% In the existing work, 
% MVSE~\cite{zhao2021multi} treats pre-defined meta-paths as different views, which solves the problem of diverse features and structures to some extent, however, it performs contrastive learning between every two views, which makes the computation of contrast loss too complicated.
% HeCo
% \cite{heco} chooses to design the contrastive mechanism using the meta-path structure and network schema as two views, but it only focuses on the feature information of first-order neighbors and ignores the context information of higher-order neighbors in the network view that focuses on meta-path instances.
% Therefore, we need to design the views to contain information about the structure and meta-path instances and to simplify the contrastive mechanism.

On the other hand, 
negative sample selection is another 
challenge to be addressed.
Note that 
most existing graph contrastive learning methods~\cite{zeng2021contrastive,zhu2020deep,GraphCL} are formulated in a sampled noise contrastive estimation framework.
For each node in a view,
random negative sampling from the rest of intra-view and inter-view nodes is widely adopted.
% they randomly take
% other intra-view
% and inter-view
% nodes 
% as its negative samples.
However, 
this could introduce many \emph{easy negative samples} and
\emph{false negative samples}.
For easy negative samples,
they are less informative and easily lead to the vanishing gradient problem~\cite{zhang2019nscaching},
while
the
false negative samples
can adversely affect the learning process for providing incorrect information.
Recently, 
there exist some
works~\cite{qin2021relation,zhang2019nscaching,zhu2021structure} that seek to identify \emph{hard negative samples} for improving the discriminative power of encoders in HINs.
Despite their success,
most of them 
fail to distinguish hard negatives from false ones.
While ASA~\cite{qin2021relation}
is proposed to solve the issue,
it is specially designed for 
the link prediction task 
and can only generate negative samples for objects based on one type of relation in HINs,
% considers one type of relation in HINs, 
which restricts its wide applicability.
Since there is not a clear cut boundary between false negatives and hard ones,
how to balance the exploitation of hard negative and false negative remains to be investigated.

In this paper,
to solve the two challenges,
we propose a heterogeneous graph contrastive learning method \name\  
with meta-path contexts and weighted negative samples.
% In particular,
Based on meta-paths,
we construct two novel views for contrast: 
\emph{the coarse view} and \emph{the fine-grained view}.
The coarse view expresses that two objects are connected by meta-paths,
while the fine-grained view utilizes meta-path contexts and  describes how they are connected.
In the coarse view,
we simply aggregate all the meta-paths and generate node embeddings that are taken as anchors.
In the fine-grained view,
we construct positive and negative samples for each anchor.
Specifically,
for each meta-path,
we first generate nodes' embeddings based on the meta-path induced graph.
To further improve the generalizability of the model,
we introduce noise by 
performing graph perturbations,
such as
\emph{edge masking} and 
\emph{feature masking},
on the meta-path induced graph to derive an augmented one,
based on which 
we also generate node embeddings.
In this way,
each meta-path generates two embedding vectors for each node.
After that,
for each node,
we fuse different embeddings from various meta-paths
to generate its final embedding vector.
Then for each anchor,
its embedding vector in the fine-grained view is taken as a positive sample while those of other nodes are considered as negative samples.
{
Subsequently,
based on theoretical analysis, 
we recognize that the InfoNCE loss lacks the ability to discriminate negative samples that have the same similarity with an anchor during training.
Therefore,
we perform node clustering 
and use the results to grade the weights of negative samples in~\name~to distinguish negative samples.
% employ hard negatives and mitigate the adverse influence of false negatives.
% we learn the importance of negative samples.
To further boost the model performance, 
we employ prototypical contrastive learning~\cite{li2020prototypical}, 
where the cluster centers, 
i.e., \emph{prototype vectors}, 
are used as positive/negative samples.
This helps learn 
compact embeddings for nodes
in the same cluster
by pushing nodes close to their
corresponding prototype vectors
and far away from other 
prototype vectors.
In addition,
since the weights are hard-valued in \name,
we further propose a variant model called \nameplus\ that can adaptively learn the soft-valued weights of negative samples, 
making negative samples more personalized and improving the learning ability of node representations.
}
Finally,
we summarize our contributions as:

\begin{itemize}
\item 
We propose a novel
heterogeneous graph contrastive learning model \name,
which constructs a coarse view and a fine-grained view for contrast based on meta-paths, respectively.
The former shows objects are connected by meta-paths, while the latter employs meta-path contexts and expresses how objects are connected by meta-paths.
{
\item We recognize the limitation of the InfoNCE loss based on theoretical analysis and propose a contrastive loss function with weighted negative samples to better distinguish 
negative samples.
% hard negatives from false ones.
% propose a variant model \nameplus\ to optimize,
% which utilizes the representations from two views to learn the weights of different types of negative samples, 
\item 
We distinguish negative samples by
performing node clustering and using the results to grade their weights.
Based on the clustering results,
we also introduce prototypical contrastive learning to 
help learn compact embeddings
of nodes in the same cluster.
Further,
we propose a variant model,
namely, 
\nameplus,
which adaptively learns soft-valued weights for negative samples.
}
\item We conduct
extensive experiments comparing \name\ and \nameplus\ with other 9 state-of-the-art methods w.r.t. node classification and node  clustering tasks
on four public HIN datasets.
Our results show that \name\ 
achieves better performance than other competitors,
and \nameplus\ further improved on the basis of \name.
\end{itemize}

\section{Related Work}
% In this section, 
% we briefly review related work from the perspectives of heterogeneous graph neural network, graph contrastive learning and hard negative sampling.
\subsection{Heterogeneous Graph Neural Network}
Heterogeneous graph neural network (HGNN)
has recently received much attention
and there have been some models proposed.
For example,
% is composed of various types of nodes and edges.
% Due to their semantic and structural complexity,
% mining valid information in heterogeneous graphs is a challenge.
HetGNN~\cite{zhang2019heterogeneous} aggregates information from neighbors of the same type with bi-directional LSTM to obtain type-level neighbor representations, 
and then fuses these neighbor representations with the attention mechanism.
HGT~\cite{hu2020heterogeneous} designs Transformer-like attention architecture
% on Web-scale heterogeneous graphs 
to calculate mutual attention of different neighbors.
HAN~\cite{han} employs both node-level and semantic-level attention mechanisms 
to learn the importance of neighbors under each meta-path 
and the importance of different meta-paths, respectively.
Considering meta-path contexts information,
% based on HAN,
MAGNN~\cite{fu2020magnn} improves HAN by employing a meta-path instance encoder to incorporate intermediate semantic nodes.
% further learns the importance of each different type of intermediate nodes on the meta-path instances.
% and this makes the information too rich to capture effective information.
% There are already some methods to solve this problem. 
% GATNE-T 
% \cite{cen2019representation}
% considers the network structure, using base embeddings and edge embeddings to capture the influential factors
% between different edge types. 
% GATNE-I \cite{cen2019representation}
% considers both the network structure and the node attributes to learn an inductive transformation function instead of learning base embeddings and meta embeddings for each node directly.
Further,
Graph Transformer Networks (GTNs) 
\cite{yun2019graph}
are capable of generating new graph structures, which can identify useful connections between unconnected nodes in the original graph and learn effective node representation in the new graphs.
% RL-HGNN \cite{zhong2020reinforcement}
% introduces reinforcement learning that models the meta-path design process as a Markov Decision Process and uses a policy network to adaptively learn its effective representations for each node.
Despite the success,
most of these methods are semi-supervised,
which heavily relies on labeled objects.
% all are based on a semi-supervised setting 
% which requires supervised signals to learn node embedding,
% so the research of contrast learning in heterogeneous graphs is necessary.

\subsection{Graph Contrastive Learning (GCL)}
Contrastive learning aims to construct positive and negative pairs for contrast,
whose goal is to pull close positive pairs while pushing away negative ones.
% in the latent space.
Recently, some works have applied contrastive learning to graphs~\cite{GraphSAGE,zhu2021graph}.
In particular,
most of these approaches use data augmentation 
to construct contrastive views 
and adopt the following three main contrast mechanisms:
(1) \emph{node-node contrast}~\cite{qiu2020gcc,jovanovic2021towards,tang2015line};
(2) \emph{graph-graph contrast}~\cite{suresh2021adversarial,zeng2021contrastive};
(3) \emph{node-graph contrast}~\cite{hassani2020contrastive,dmgi}.
For example,
GRACE~\cite{zhu2020deep} treats two augmented graphs by node feature masking and edge removing as two contrastive views
and then pulls the representation of the same nodes close while pushing the remaining nodes apart.
Inspired by SimCLR~\cite{chen2020simple} in the visual domain,
GraphCL~\cite{you2020graph} further extends this idea to 
graph-structured data, 
which relies on node dropping and edge perturbation 
to generate two perturbed graphs
and then maximizes the two graph-level mutual information (MI).
Moreover, 
DGI~\cite{dgi} is the first approach 
to propose the contrast between node-level embeddings and graph-level embeddings, 
which allows graph encoders to learn local and global semantic information.
In heterogeneous graphs,
HeCo~\cite{heco} takes two views from network schema and meta-paths to generate node representations and perform contrasts between nodes.
HDGI~\cite{ren2019heterogeneous} extends DGI to HINs and learns high-level node representations by maximizing MI between local and global representations.
However, most of these methods select negative samples by random sampling, which will introduce false negatives.
These samples will adversely affect the learning process, 
so we need to distinguish them from hard negatives.

\subsection{Hard Negative Sampling}
In contrastive learning, 
easy negative samples are easily distinguished from anchors, 
while hard negative ones are similar to anchors.
% but have inconsistent true labels.
Recent studies~\cite{robinson2020contrastive} have shown that contrastive learning can benefit from hard negatives, 
so there are some works that explore the construction of hard negatives.
The most prominent method 
% of constructing hard negative samples 
is based on mixup~\cite{zhang2017mixup},
a data augmentation strategy 
for creating convex linear combinations between samples.
In the area of computer vision, 
Mochi~\cite{kalantidis2020hard} 
measures the distance between samples by inner product 
and randomly selects two samples 
from $N$ nearest ones 
to be combined by mixup as synthetic negative samples.
Further, 
CuCo~\cite{che2021multi} uses cosine similarity to measure the difference of nodes in homogeneous graphs.
% calculating the cosine value of the angle between two vectors.
In heterogeneous graphs,
STENCIL~\cite{zhu2021structure}
uses meta-path-based Laplacian positional embeddings and personalized PageRank scores for modeling local structural patterns of the meta-path-induced view.
However, these methods either fail to distinguish hard negative samples from false ones 
or 
are built on one type of relation in HINs, which restricts the wide applicability of these models. 
% In \name, 
% we distinguish the importance of negative samples by attaching different weights to them.

% In the heterogeneous graph domain, HORACE[] uses a meta-path-based structure-aware metric to measure the hardness and selects top-K to synthesize hard negative samples.
\section{Preliminary}
In this section, 
we formally define some related concepts used in this paper. 

\begin{definition}
\textbf{Heterogeneous Information Network (HIN).}
An HIN is defined as a graph $\mathcal{G} = (\mathcal{V}, \mathcal{E})$, where $\mathcal{V}$ is a set of nodes and $\mathcal{E}$ is a set of edges, each represents a binary relation between two nodes in $\mathcal{V}$.
Further,
$\mathcal{G}$
% and it 
is associated with two mappings: 
(1) node type mapping function $\phi : \mathcal{V} \to \mathcal{T}$ and (2) edge type mapping function $\psi : \mathcal{E} \to \mathcal{R}$, where $\mathcal{T}$ and $\mathcal{R}$ denote the sets of node and edge types, respectively. 
If $\vert \mathcal{T} \vert + \vert \mathcal{R} \vert > 2$,
the network $\mathcal{G}$ 
is 
% described as 
an HIN; otherwise it is homogeneous.
\end{definition}

\begin{definition}
\textbf{Meta-path.}
A meta-path $\mathcal{P}$: $T_1 \stackrel{R_1}{\to} T_2 \stackrel{R_2}{\to} \dots \stackrel{R_l}{\to} T_{l+1}$ (abbreviated as $T_1 T_2 \dots T_{l+1}) $ 
expresses a composite relation $R = R_1 \circ R_2 \circ \dots \circ R_l$ between nodes of types $T_1$ and $T_{l+1}$, 
where $\circ$ denotes the composition operator on relations. 
% If there is a relation $\mathcal{}R$ between two objects $x$ and $y$, then there is a path, denoted as $p_{x \rightsquigarrow y}$.
If two nodes $x_i$
and $x_j$ are related by the composite relation $R$, 
then there
exists a path that
connects $x_i$ to $x_j$ in
$\mathcal{G}$, 
denoted by $p_{x_i\rightsquigarrow x_j}$.
Moreover, the sequence of nodes and edges in $p_{x_i\rightsquigarrow x_j}$
matches the sequence of types $T_1, ..., T_{l+1}$ and relations $R_1, ..., R_l$ according to the node type mapping $\phi$ and the edge type mapping $\psi$, respectively. We say that $p_{x_i\rightsquigarrow x_j}$
is a
path instance of $\mathcal{P}$, denoted by $p_{x_i\rightsquigarrow x_j} \vdash \mathcal{P}$.
\end{definition}

\begin{definition}
\textbf{Meta-path Context~\cite{li2021leveraging}.}
Given two objects $x_i$ and $x_j$ that are related by a meta-path $\mathcal{P}$, 
the meta-path context is the set of path instances of $\mathcal{P}$ between $x_i$ and $x_j$.

\end{definition}

\begin{definition}
  \textbf{Heterogeneous Graph Contrastive Learning.}
Given an HIN $\mathcal{G}$,
our task is to learn 
node representations
by constructing 
positive and negative pairs for contrast.
In this paper,
we only focus on 
one type of nodes, 
which are considered as target nodes.  
\end{definition}

% In this paper, 

%\textbf{Subgraph on pre-defined meta-paths} 
%We use predefined meta-paths in our method. 
%For every pre-defined meta-path, we can get $S_1,S_2 \dots S_t$, where $t$ is the number of pre-defined meta-path and $S_n$ includes the adjacency matrix $A_n$ and the target node feature $x_i$. 
%We perform data augmentation on $S_n$ to represent another view.
%Specifically, we perform masking features and dropping the edges with a certain probability. 
%Based on this, we combine the original feature matrix and adjacency matrix with its two-by-two to obtain four different subgraphs: $ \widetilde{S_n} = \{ S_n^{(1)} (A_n,x_i),S_n^{(2)} (A_n^*,x_i),  S_n^{(3)} (A_n,x_i^*),S_n^{(4)} (A_n^*,x_i^*) \}$ , $*$ denotes perturbed matrix or object.

\begin{figure*}[ht]
    \centering
    \includegraphics[width=0.95\textwidth]{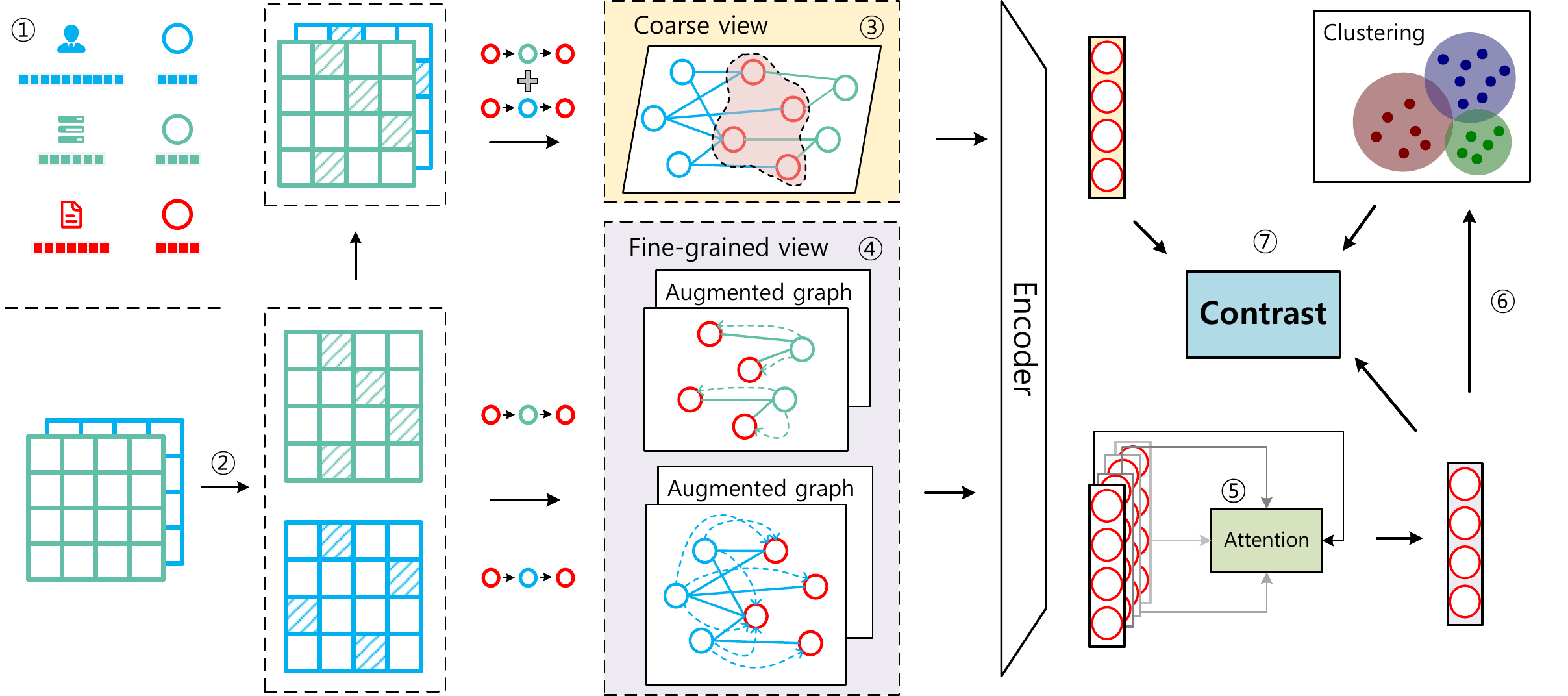}
    \caption{The overall framework of the~\name~model. For details of each step, see Section~\ref{method}.}
    \label{fig:overall}
  \end{figure*}

% \begin{figure}[!t]
% \centering
% \includegraphics[width=2.5in]{asset/model.png}
% where an .eps filename suffix will be assumed under latex, 
% and a .pdf suffix will be assumed for pdflatex; or what has been declared
% via \DeclareGraphicsExtensions.
% \caption{Simulation results for the network.}
% \label{fig_sim}
% \end{figure}
\section{Methodology}\label{method}
In this section, we introduce our method~\name~and the variant model~\nameplus.
The general model diagram is shown in Fig.~\ref{fig:overall}.
We perform feature transformation and neighbor filtering as preprocessing steps.
First, we map the feature vectors of each different type of nodes into the same dimension~(Step~\ding{172})
and identify a set of neighbors for nodes based on each meta-path~(Step~\ding{173}).
% Next, we identify a selected set of x’s path-based neighbors based on a given set of meta-paths that express various semantic relations (Step~\ding{173}).
Then, we construct a coarse view by aggregating all meta-paths (Step~\ding{174}), while constructing a fine-grained view with each meta-path's contextual semantic information (Step~\ding{175}).
After that, we fuse different embeddings from various meta-paths in the fine-grained view through the attention mechanism ~(Step~\ding{176}).
We take node embeddings in the coarse view as anchors and those in the fine-grained view as the positive and negative samples.
To be capable of distinguishing false negative samples and hard negative samples, 
we perform clustering and assign weights to the negative samples with the clustering results (Step~\ding{177}).
Finally, 
to further boost the model
performance, 
we use prototypical contrastive learning to calculate the contrastive loss and prototypical loss based on the node embedding vectors under coarse view and fine-grained view and the clustering results (Step~\ding{178}).
{
In addition, to further obtain adaptive negative sample weights, we propose the variant \nameplus~with MLP to learn the weights instead of clustering and calculate the contrastive loss.
}
Next, we describe each component in detail.

\subsection{Node Feature Transformation}
\label{sec:4.1}
Since an HIN is composed of different types of nodes and each type has its own feature space,
we need to first preprocess node features to transform them into the same space.
% There are many types of nodes in heterogeneous graphs, and each different type of node has different dimensions of features.
% So we need to preprocess these different types of nodes first.
Specifically, 
for each object $x_i$ in type $T$,
we use the type-specific mapping matrix $W_{T}^{(1)}$ to transform 
the raw features of 
$x_i$ into: 
% embedding $h_i$:
\begin{equation}
\label{eq:initial_emb}
    h_i = \sigma(W_{T}^{(1)} \cdot x_i + b_{T}),
\end{equation}
where $h_i \in \mathbb{R}^{d}$ is the projected initial embedding vector of 
% the original feature 
$x_i$, $\sigma(\cdot)$ is an activation function, and $b_{T}$ denotes the bias vector.

%Similarly, we do the same transformation for the perturbed $x_i^*$ to obtain $h_i^*$

\subsection{Neighbor Filtering}
Given an object $x$,
meta-paths can be used to derive its multi-hop neighbors with specific semantics.
When meta-paths are long,
the number of related neighbors to $x$ could be numerous.
Directly aggregating information from these neighbors to generate $x$'s embedding will be time-consuming.
On the other hand,
% these neighbors could contain noise. 
the irrelevant neighbors of $x$
cannot provide useful information to predict $x$'s label and they could adversely affect the quality of the generated embedding of $x$.
Therefore,
we filter $x$'s meta-path induced neighbors and select the most relevant to $x$.
Inspired by~\cite{li2021leveraging},
we adopt \emph{PathSim}~\cite{sun2011pathsim} to measure the similarity between objects.
Specifically,
given a meta-path $\mathcal{P}$,
the similarity between two objects $x_i$ and $x_j$ of the same type w.r.t. 
$\mathcal{P}$ is computed by:
% when the meta-path is abundant or very long, the information around it is so massive.
% Aggregating neighbors directly for $x$ would mix up a lot of low-quality information.
% The neighbors need to be sampled to make the role of surrounding neighbors more important.
% In our method, we preprocess the adjacency matrix $A_n$ of the predefined meta-path and utilize PathSim \cite{sun2011pathsim} to select the most valuable top-$K$ edges for $x$.
% This is a meta-path-based similarity measure between two nodes, which reacts to strong connectivity and high visibility.
% In previous work, methods [xxx] have been applied to PathSim and have proven effective in a variety of downstream tasks.
% Considering two objects, $x_i$ and $x_j$ of the same type w.r.t $\mathcal{P}$:
\begin{equation}
\small
  PS(x_i,x_j) = \frac{2\times \vert \{p_{x_i \rightsquigarrow x_j} \vert p_{x_i \rightsquigarrow x_j} \vdash \mathcal{P}\} \vert }{ \vert \{p_{x_i \rightsquigarrow x_i} \vert p_{x_i \rightsquigarrow x_i} \vdash \mathcal{P}\} \vert +  \vert \{p_{x_j \rightsquigarrow x_j} \vert p_{x_j \rightsquigarrow x_j} \vdash \mathcal{P}\} \vert},
\end{equation}
where $p_{x_i \rightsquigarrow x_j}$ is a path instance between $x_i$ and $x_j$.
Based on the similarities,
for each object,
we select its top-$K$ neighbors with the largest similarity.
The removal of irrelevant neighbors can significantly reduce the number of neighbors for each object,
which further improves the model efficiency.
After neighbor filtering,
the induced adjacency matrix by meta-path $\mathcal{P}$ 
is denoted as $A^{\mathcal{P}}$.
% such that $A_{ij}^{\mathcal{P}} = 1$ if $x_j$ is the neighbor of $x_i$
% or $x_i$ is the neighbor of $x_j$; 0, otherwise.

% Using PathSim allows the removal of nodes' secondary neighbors to prevent the additional 
%  unnecessary noise when aggregating neighbor information.

\subsection{Coarse View}
% Meta-paths capture rich semantic relations between objects. 
% Based on meta-paths,
We next construct coarse view 
% for contrastive learning.
% The coarse view 
to describe which
objects are connected by meta-paths.
Given a set of meta-paths, 
% $\mathcal{PS}$,
each meta-path 
$\mathcal{P}$ can induce its own adjacency matrix $A^{\mathcal{P}}$.
To provide a coarse view on the connectivity between objects by meta-paths,
we fuse the meta-path induced adjacency matrices and define
$\widetilde{A} = \frac{1}{m} ( \widetilde{A^{\mathcal{P}_1}} + \widetilde{A^{\mathcal{P}_2}} + \cdots +  \widetilde{A^{\mathcal{P}_m}}) $,
where
$m$ is the number of meta-paths and $\widetilde{A^{\mathcal{P}_u}} = D^{-1/2}A^{\mathcal{P}_u}D^{-1/2}$ is the normalized adjacency matrix.
Here,
$D$ is a diagonal matrix with $D_{ii} = \sum_{j=1}^{|V|} A^{\mathcal{P}_u}_{ij}$,
where $|V|$ is the number of target nodes.
% we generate the adjacent matrix in the coarse view by fusing the corresponding adjacency matrices induced by them.
% to generate the adjacent matrix in the coarse view.
% Note that, for each adjacency matrix
% Here we construct the coarse view, which focuses on the information of the target nodes. 
% Coarse view describes whether the target nodes are connected or not.
%i.e., we simply keep connection relationships and the target nodes' own features.
% For simplicity,
%Before we aggregate neighbors, we need to filter out invalid information from neighbors.
%After filtering the secondary edges for each pre-defined meta-path with PathSim,
% We stack the adjacency matrices of all meta-paths.
% we define $ \overline{A} $ as the adjacency matrix in the coarse view, where $ \overline{A} = \frac{1}{|\mathcal{PS}|} ( \widetilde{A_1} + \widetilde{A_2} + \cdots +  \widetilde{A_w}) $,
% $w$ is the number of meta-paths
% and $\widetilde{A} = D^{1/2}AD^{1/2}$ is the normalized adjacency matrix.
% This enhances the strong connections and lightens the weak connections between the target nodes.
% Note that $\overline{A}$ is a sparse matrix due to the sparsity of $A^{\mathcal{P}_u}$.
After that, 
we feed node embeddings 
calculated by Equation~\ref{eq:initial_emb}
and $\widetilde{A}$ into 
% put the target node embedding $h_i$ and $\overline{A}$ into
a two-layer GCN encoder to get the representations of nodes in the coarse view.
Specifically,
for node $x_i$,
we can get its coarse representation $z_i^{c}$:
% its coarse-view representation $z_i^{c}$ by:
% $z_i^{c}$.
% , and the encoder applies a two-layer GCN model as follows:
\begin{equation}
% \begin{split}
    z_i^{c} = {\rm Encoder}(\widetilde{A},h_i),
  % {\rm Encoder} (\overline{A},h_i) 
  % &= {\rm GCN}(\overline{A},h_i)
% \end{split}
\label{eq:coarse}
\end{equation}
% where $z_i^{c}$ is $x_i$'s coarse representation and we take it as an anchor for contrast.
% We take these embeddings as anchors.

\subsection{Fine-grained View}
The fine-grained view 
characterizes how two objects are connected by meta-paths,
which is in contrast with the coarse view.
% We next introduce how to leverage meta-path contexts and learn fine-grained representations for objects.
% describes exactly what meta-path context information is used to connect the nodes to each other, so we aim to capture meta-path contextual information for each target node and construct positive and negative samples for each anchor.
Given a meta-path set $\mathcal{PS} = \{\mathcal{P}_1, ..., \mathcal{P}_m\}$,
for each meta-path $\mathcal{P}_u \in \mathcal{PS}$,
let $\mathcal{P}_u = T_0T_1...T_l$,
where the meta-path length is $l+1$.
The meta-path 
% $\mathcal{P}_u$ 
can link objects of type $T_0$ to that of type $T_l$ via a series of intermediate object types.
Since meta-path contexts are composed of path instances and 
capture details on how two objects are connected,
we utilize meta-path contexts
to 
learn fine-grained representations for objects.
However,
when $l$ is large,
due to the numerous path instances between two objects,
directly handling each path instance as MAGNN~\cite{fu2020magnn} could significantly degenerate the model efficiency, as pointed out in~\cite{li2021leveraging}.
We instead use objects in the intermediate types of meta-path $\mathcal{P}_u$
to leverage the information of meta-path contexts.
Specifically,
given a meta-path $\mathcal{P}_u$
and an object $x_i$ of type $T_0$,
we denote $N_i^{T_j}$ as $x_i$'s $j$-hop
neighbor set w.r.t. $\mathcal{P}_u$.
Then we generate $x_i$'s initial fine-grained embedding by aggregating information from all its $j$-hop neighbors with $j\leq l$.
Formally, 
we have
% It is assumed that under a meta-path $\mathcal{P}_n \in \mathcal{P}_1, \mathcal{P}_2,  \dots, \mathcal{P}_w $, the target node $h_i$ is connected to other $p$ node types $\{ T_{n1}, T_{n2}, \dots, T_{np} \}$, and its set of neighbors of type $T_{nj}$ is defined as $N^{T_{nj}}$.
% We aggregate information under the meta-path instances and propagate it to the target node. 
\begin{equation}
    h_i^{\mathcal{P}_u} = \sigma(h_i + \sum_{ j = 1 }^{l} \sum_{ x_v \in N_i^{T_{j}} } W_{uj}^{(2)} \cdot h_v ),
\end{equation}
where the learnable parameter matrix $W^{(2)}_{uj}$ 
corresponds to
% designed for 
the $j$-hop neighbors w.r.t. $\mathcal{P}_u$.
% every different type of neighbor node.
After that, 
we put the node embedding $h_i^{\mathcal{P}_u}$ that aggregates the meta-path context information 
and the adjacency matrix under the meta-path $\widetilde{A^{\mathcal{P}_u}}$ into a two-layer GCN encoder to generate $x_i$'s fine-grained embedding:
\begin{equation}
\label{eq:fgz}
    z_i^{\mathcal{P}_u} = {\rm Encoder} (\widetilde{A^{\mathcal{P}_u}},h_i^{\mathcal{P}_u}),
\end{equation}
Note that the encoder here is the same as that used in the coarse view (see Equation~\ref{eq:coarse}).
Further, 
to improve the model generalizability,
% of the model as well as expand the node representation, so 
we introduce noise to the meta-path induced graph by performing graph augmentation, such as edge masking and feature masking.
After the perturbed graph is generated,
we feed it into Equation~\ref{eq:fgz}
to
% perform the same operation as $z_i^{\mathcal{P}_n}$ on the augmented subgraph, incorporating the meta-path context information and feeding it to the encoder.
% In this way, we can 
generate the node embedding $z_i^{\hat{\mathcal{P}_u}}$.
% after perturbation.
In this way,
for each meta-path $\mathcal{P}_u$ and an object $x_i$,
we generate two embeddings 
$z_i^{\mathcal{P}_u}, z_i^{\hat{\mathcal{P}_u}}$.
Given a meta-path set $\mathcal{PS} = \{\mathcal{P}_1, ..., \mathcal{P}_m\}$,
we can generate ${Z}_i=\{ z_i^{\mathcal{P}_u}, z_i^{\hat{\mathcal{P}_u}} \vert \mathcal{P}_u \in \mathcal{PS}  \}$ 
% to denote the embedding obtained 
for node $x_i$ from various meta-paths.
Finally,
we fuse these embeddings by the attention mechanism:
\begin{equation}
% \begin{aligned}
    w_s = \frac{1}{\vert V \vert} \sum_{x_i \in V} \textbf{a}^{\rm T} \cdot \tanh(W_{att}z_i^{s} + b_{att}),
    \;
    \beta_s = \frac{\exp (w_s)}{\sum\nolimits_{j=1}^{\vert Z_i \vert} \exp(w_j)}
% \end{aligned}
\end{equation}

Here, we measure the weight of each node type. 
$V$ is the set of target nodes, $W_{att} \in \mathbb{R}^{d \times d}$ is the weight matrix, $b_{att}$ is the bias vector and $\beta_s$ denotes the attention weight.
We can generate $x_i$'s fine-grained embedding vector $z_{i}^{f}$:
\begin{equation}
    z_{i}^{f} = \sum_{s=1}^{\vert Z_i \vert} \beta_s \cdot z_i^s, 
\end{equation}

\subsection{Theoretical analysis on the InfoNCE loss}

{
In contrastive learning, 
different negative samples have different characteristics,
so their impact should not be the same. 
For a given anchor, 
some negative samples are easy to distinguish, while some hard negative samples may have a certain degree of similarity with the anchor but belong to a different class.
Therefore, 
in order to keep negative samples away from the anchor,
% when reducing the similarity between the anchor and negative samples, 
it is necessary to distinguish the effects of different negative samples on the anchor. 
% Studying the different roles of negative samples is of research value.
With this in mind, 
we first propose Theorem~\ref{theorem1}.

% \emph{\textbf{Theorem 1.}}
\begin{theorem}
\label{theorem1}
Consider 
the contrastive learning InfoNCE loss~\cite{oord2018representation} that uses dot product to measure node similarity,
% which we 
denoted as $\mathcal{L}$.
% Assuming the similarity function is defined as the dot product,
Let $f(x)$ represent the learned embedding of node $x$.
Given 
$x_i$ as an anchor,
$x_k$ as its positive sample
and $x_{t_1}, x_{t_2}$ as its two negative samples, 
with back propagation,
we can get:
(1)
If $f(x_i)^Tf(x_{t_1}) > f(x_i)^Tf(x_{t_2})$, 
then 
% $\nabla f(x_{t_1}) > \nabla f(x_{t_2})$.
$ 
\bigg\vert 
\frac{\partial \mathcal{L}}{\partial f(x)}
\vert_{x=x_{t_1}}
\bigg\vert 
>
\bigg\vert 
\frac{\partial \mathcal{L}}{\partial f(x)}
\vert_{x=x_{t_2}}
\bigg\vert 
$,
(2)
$ 
\bigg\vert 
\frac{\partial \mathcal{L}}{\partial f(x)}
\vert_{x={x_k}}
\bigg\vert 
\geq
\bigg\vert 
\frac{\partial \mathcal{L}}{\partial f(x)}
\vert_{x=x_{t_1}}
\bigg\vert
% \bigg\vert 
% \frac{\partial \mathcal{L}}{\partial f(x)}
% \vert_{x=x_{t_2}}
% \bigg\vert 
$.
% $\nabla f(x_k) \gg \nabla f(x_{t_1}),\nabla f(x_{t_2})$.
\end{theorem}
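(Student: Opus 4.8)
The plan is to pin down the InfoNCE term associated with the anchor $x_i$ and differentiate it directly. Write $s_j := f(x_i)^{T} f(x_j)$ for the dot-product similarity between the anchor and a node $x_j$, let $\mathcal{N}$ be the set of negative samples of $x_i$, and set $Z := e^{s_k} + \sum_{t\in\mathcal{N}} e^{s_t}$ together with the softmax weights $p_j := e^{s_j}/Z$. Then
\begin{equation}
\mathcal{L} \;=\; -\log\frac{e^{s_k}}{Z} \;=\; -\,s_k + \log Z ,
\end{equation}
and since $\mathcal{N}\neq\varnothing$ forces $Z > e^{s_k}$ we have $0 < p_k < 1$, while $(p_k, \{p_t\}_{t\in\mathcal{N}})$ forms a probability distribution. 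This reformulation is the only real modelling choice; everything else is mechanical.

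Next I would apply the chain rule. Because $s_j$ is linear in $f(x_j)$ with $\partial s_j / \partial f(x_j) = f(x_i)$, differentiating $\mathcal{L}$ gives
\begin{equation}
\frac{\partial \mathcal{L}}{\partial f(x)}\Big|_{x=x_t} = p_t\, f(x_i) \quad (t\in\mathcal{N}), \qquad \frac{\partial \mathcal{L}}{\partial f(x)}\Big|_{x=x_k} = (p_k-1)\, f(x_i).
\end{equation}
Every such gradient is thus a scalar multiple of the fixed vector $f(x_i)$, so comparing gradient magnitudes reduces to comparing the scalar coefficients: the gradient norm at a negative $x_t$ equals $p_t\,\|f(x_i)\|$, and the gradient norm at the positive $x_k$ equals $(1-p_k)\,\|f(x_i)\|$. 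Claim (1) is then immediate: $f(x_i)^{T}f(x_{t_1}) > f(x_i)^{T}f(x_{t_2})$ means $s_{t_1} > s_{t_2}$, and since $u\mapsto e^u$ is strictly increasing and $Z$ is shared, $p_{t_1} > p_{t_2}$, giving the strict inequality. Claim (2) follows from $1 - p_k = \sum_{t\in\mathcal{N}} p_t \ge p_{t_1}$, as $p_{t_1}$ is one nonnegative summand of the total negative mass; multiplying through by $\|f(x_i)\|$ yields the inequality, with equality exactly when $x_{t_1}$ is the only negative sample.

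I do not expect a genuine obstacle. The one point needing care is the reading of the statement: it concerns the loss contribution of the single anchor $x_i$ (equivalently, the node embeddings are treated as independent arguments), since in the full objective a node may simultaneously act as anchor, positive, and negative in different terms, and its total gradient is then a sum of contributions of the above form — the per-anchor comparison is what the theorem asserts. A temperature $\tau$, were it included, merely rescales every coefficient by $1/\tau$ and leaves all comparisons unchanged, so it does not affect the argument.
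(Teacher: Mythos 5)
Your proof is correct and follows essentially the same route as the paper: differentiate the per-anchor InfoNCE term directly, observe that every gradient is a scalar multiple of $f(x_i)$, and reduce both claims to comparing softmax coefficients. The only (immaterial) difference is that you include the positive term $e^{s_k}$ in the normalizer $Z$, so your bound for claim (2) reads $1-p_k=\sum_{t}p_t\ge p_{t_1}$, whereas the paper normalizes over negatives only and bounds the softmax term by $1$; both give the same conclusion with essentially the same equality condition, and your handling of the omitted temperature $\tau$ is fine.
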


\begin{proof}

The contrastive loss function InfoNCE is defined as:
\begin{equation}
\begin{aligned}
\mathcal{L}_i
&= -\log \frac{\exp(\text{sim}(f(x_i), f(x_k)) / \tau)}{\sum_{j=1}^n \exp(\text{sim}(f(x_i), f(x_j)) / \tau)}, \\
\end{aligned}
\end{equation}
where 
% $x_i$ is anchor,
% $x_k$ is a positive sample,
% $x_j$ is one of negative samples,
$\text{sim}(f(x_i), f(x_j))$ measures the similarity between node embeddings $f(x_i)$ and $f(x_j)$,
$\tau$ is a hyperparamter denotes the temperature and $n$ is the number of negative samples.
Typically, the dot product is used as a similarity function, and the InfoNCE loss can be further simplified as:
\begin{equation}
\begin{aligned}
\mathcal{L}_i
&= -\log \frac{\exp(f(x_i)^Tf(x_k) / \tau)}{\sum_{j=1}^n \exp(f(x_i)^Tf(x_j) / \tau
)} \\
&=
-\log \exp(f(x_i)^Tf(x_k) 
% / \tau
\frac{1}{\tau}
) +
\log [\sum_{j=1}^n \exp(f(x_i)^Tf(x_j)
\frac{1}{\tau}
% / \tau
)] \\
&=
-f(x_i)^Tf(x_k) 
\frac{1}{\tau}
% / \tau
+
\log [ \sum_{j=1}^n \exp(f(x_i)^Tf(x_j) 
\frac{1}{\tau}
% / \tau
) ]
\end{aligned}
\end{equation}

For a particular negative sample $x_t, t= 1,2,\cdots,n$,
the gradient of the negative sample $x_t$ is:
\begin{equation}
\begin{aligned}
% &\quad \frac{\partial \mathcal{L}_i}{\partial f(x_t)} \\
&\quad \frac{\partial \mathcal{L}_i}{\partial f(x)} \bigg\vert_{x=x_t} \\
&=
\frac{1}
{ \sum_{j=1}^n \exp(f(x_i)^Tf(x_j) / \tau)}
\frac{\partial \sum_{j=1}^n \exp(f(x_i)^Tf(x_j) / \tau)}
{\partial f(x_t)} \\
&=
\frac{\exp(f(x_i)^Tf(x_t) / \tau)}
{ \sum_{j=1}^n \exp(f(x_i)^Tf(x_j) / \tau)}
\frac{\partial (f(x_i)^Tf(x_t) / \tau)}
{\partial f(x_t)} \\
&=
\frac{\exp(f(x_i)^Tf(x_t) / \tau)}
{ \sum_{j=1}^n \exp(f(x_i)^Tf(x_j) / \tau)}
\frac{1}{\tau}
f(x_i) \\
% \overrightarrow{f(x_i)}
% &=
% \frac{1}{\tau}
% \frac{1}
% { \sum_{j=1}^n \exp(f(x_i)f(x_j)^T / \tau)}
% \exp(f(x_i)f(x_t)^T / \tau)
% f(x_i) \\
\end{aligned}
\label{eq:neg}
\end{equation}

For all the negative samples 
of anchor $x_i$,
the gradient only depends on $f(x_i)^Tf(x_t)$.
This is 
% which denotes the similarity with the anchor to decide the gradient size between different negative samples.
because $f(x_i)$ determines the direction of back propagation, 
and $\frac{1}
{ \sum_{j=1}^n \exp(f(x_i)^Tf(x_j) / \tau)}
\frac{1}{\tau}$ is equal for all the negative samples.
We can thus derive inequality (1) in Theorem~\ref{theorem1}, which states that 
if $f(x_i)^Tf(x_{t_1}) > f(x_i)^Tf(x_{t_2})$, 
then 
% $\nabla f(x_{t_1}) > \nabla f(x_{t_2})$.
$ 
\bigg\vert 
\frac{\partial \mathcal{L}_i}{\partial f(x)}
\vert_{x=x_{t_1}}
\bigg\vert 
>
\bigg\vert 
\frac{\partial \mathcal{L}_i}{\partial f(x)}
\vert_{x=x_{t_2}}
\bigg\vert 
$.

% so it depends on $f(x_i)^Tf(x_t)$,
% which denotes the similarity with the anchor to decide the gradient size between different negative samples.

In addition, 
we can also compute the gradient of the positive sample by taking its derivative.

\begin{equation}
\label{eq:pos}
% \frac{\partial \mathcal{L}_i}{\partial f(x_k)} 
\frac{\partial \mathcal{L}_i}{\partial f(x)} \bigg\vert_{x=x_k} 
=
-
\frac{1}{\tau}
f(x_i)
\end{equation}
We observe that compared to positive samples,
Equation~\ref{eq:neg} for negative samples has an 
% additional term similar to the softmax function,
additional softmax term, whose
% in terms of gradient magnitude, 
% with a 
value ranges between 0 and 1.
So we can derive inequality (2) in Theorem~\ref{theorem1}:
$\bigg\vert \frac{\partial \mathcal{L}_i}{\partial f(x)} \vert_{x=x_k} \bigg\vert \geq \bigg\vert \frac{\partial \mathcal{L}_i}{\partial f(x)} \vert_{x=x_t} \bigg\vert$.
The equation holds if and only if the softmax term equals one, which is generally very difficult to satisfy.

% indicating that in each epoch, the anchor exerts a much stronger pull force on the positive samples than the push force on the negative samples.
% This can ensure that samples of the same class remain close when mapped to the feature space.

\end{proof}

From Theorem~\ref{theorem1},
easy negative samples 
that are less similar to the anchor 
lead to smaller gradient magnitude, 
while 
hard negative samples 
% with higher similarity to the anchor 
can derive larger gradient magnitude.
This is because easy negative samples are already far enough from the anchor and we don't need to pay much attention to them. 
However, 
hard negative samples need larger gradients to push them apart.
% and distinguish whether they ultimately belong to the same class.
Further,
the comparison between Equation~\ref{eq:neg} and 
Equation~\ref{eq:pos} shows that
the gradient magnitude of positive samples is generally much larger than that of negative samples,
due to the additional softmax term 
that is generally smaller than 1
in
Equation~\ref{eq:neg}.
% for negative samples over Equation~\ref{eq:pos} for positive samples.
In summary,
% other words,
in each epoch,
compared to negative samples with lower similarity to the anchor, 
negative ones with higher similarity will be more violently pushed 
% updated more to move 
away from the anchor.
On the other hand,
positive samples will have a larger update magnitude than negative samples, 
resulting in a closer proximity to the anchor.

% To validate Theorem~\ref{theorem1},
% we randomly select an anchor node,
% calculate its similarity with all the negative samples

We next randomly select a paper and an author as the anchor node from the ACM dataset~\cite{zhao2020network} and the DBLP dataset~\cite{fu2020magnn}, respectively,
and study the relationship between node similarity and gradient magnitude of loss functions w.r.t. negative samples.
As shown in
Figure~\ref{fig:con},
the orange curves in both sub-figures
show that 
the gradient magnitude of the InfoNCE loss is proportionally to 
the similarity between negative samples with the anchor node,
which is consistent with Equation~\ref{eq:neg}. 
% From Figure~\ref{fig:con}, 
% the orange curve describes the relationship between the gradient magnitude of a negative sample and its similarity value to a randomly selected anchor in the back-propagation process of the loss function InfoNCE. 
% According to Equation~\ref{eq:neg}, 
% their relationship is positively correlated, 
% resulting in a curve. 
Although the InfoNCE loss can distinguish 
samples from different classes to some extent,
the gradient magnitude only depends on node similarity,
which lacks the flexibility to 
capture the variability in node embeddings.
% For example,
% this method can distinguish the similarity between different samples (horizontal direction) and separate different
% classes of samples to some extent,
% in InfoNCE, 
% similarity value is the sole criterion for determining the gradient magnitude. 
% This method is too coarse and lacks the ability to distinguish individual differences in different types of negative samples (vertical direction).
% For negative samples with the same similarity value as anchor, 
% mixed with correct negative samples, false negatives and hard negatives.
% Each sample also has its own characteristics.
For example, 
% For the sample pairs $(x_1,x_2)$ and $(x_3,x_4)$, 
suppose there are three samples whose representations are: $x_1 (1,1)$, $x_2 (1,0)$ and $x_3 (0,1)$, respectively.
We take
$x_1$ 
% is treated
as the anchor.
For the sample pairs $(x_1,x_2)$ and $(x_1,x_3)$, 
their similarity values are both 1, 
but the semantic information contained in $x_2$ and $x_3$
is completely different, 
and even opposite to each other.
This should further lead to 
different gradient update directions.
Therefore, 
% we cannot determine whether a sample belongs to the same class as the anchor solely based on a single similarity function. 
using only node similarity to determine the gradient of a negative sample is insufficient. 
We thus need to introduce other
metrics to capture the fine-grained information of embeddings of negative samples.
}

\subsection{The \name\ model}
In this section,
we perform contrastive learning to learn node embeddings with the constructed coarse view and fine-grained view and propose our loss function with additional weights for negative sample pairs.
% After the coarse view and fine-grained view are constructed,
% we perform contrastive learning to learn node embeddings.
Before contrast, 
we use a projection head ({one-layer MLP}) to map node embedding vectors to the space where contrastive loss can be applied.
Specifically,
for $x_i$, we have:
\begin{equation}
\begin{aligned}
\overline{z}_i^{c} &= \sigma (W_{proj} z_i^{c} + b_{proj} ),\\ 
\overline{z}_i^{f} &= \sigma (W_{proj} z_i^{f} + b_{proj} ),
\end{aligned}
\end{equation}
After that,
we take representations in the coarse view as anchors and
construct the positive and negative samples from the fine-grained view.
For each node $x_i$,
% Specifically, 
we take $\overline{z}_i^{c}$ as the anchor, $\overline{z}_i^{f}$ as the corresponding positive sample, and 
all other node representations in the fine-grained view
% $\overline{Z^{f}}$ 
as negative samples.
% However, unlike other methods, 
Further,
to utilize hard negatives and mitigate the adverse effect of false negatives,
we learn the importance of negative samples.
In particular,
we perform node clustering 
based on the fine-grained representations for 
% $\overline{Z}^{f} $ is clustered 
$M$  times, 
where the number of clusters are set as 
$U = \{k_1, k_2, \cdots, k_M\}$.
Then, we assign different weights
to negative samples of a node based on the clustering results.
% whether other nodes are in the same cluster as node $x_i$.
Intuitively,
when the number of clusters is set large,
each cluster will become compact.
Then
% with 
% $M$ times of clustering and 
% a large number of pre-defined clusters,
compared with hard negatives,
false negatives and easy negatives are more likely to be assigned in the same cluster and different clusters with the anchor node, respectively. 
Therefore,
% the less times 
% a negative sample is clustered in the same cluster with the anchor node,
% the larger priorities it should be given.
we use $\gamma_{ij}$ to denote the weight of node $x_j$ as a negative sample to node $x_i$ and set 
it as a function $\mathcal{F}$ of clustering results.
% We denote
% $\gamma_{ij} = \mathcal{F}(C_{1},C_{2},\cdots,C_{M})$, 
% where $C_{r}$ is the $r$-th clustering result.

\begin{figure}
    \centering
\includegraphics[width=0.46\textwidth]{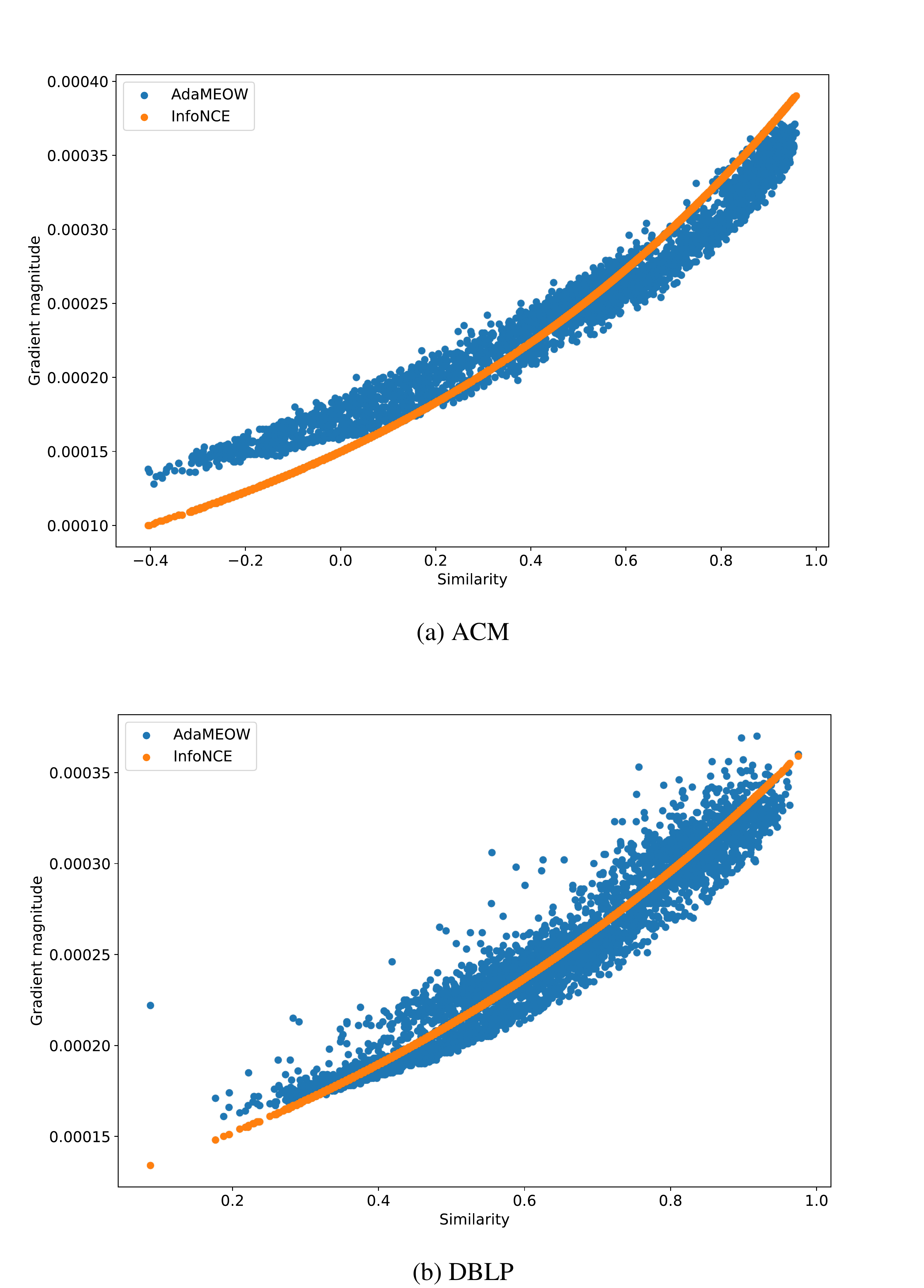}
    \caption{
    {
    The relationship between node similarity $\text{sim}(\cdot)$ with a randomly selected anchor and gradient magnitude of loss functions w.r.t. negative samples after training 500 epochs on the (a) ACM dataset and 800 epochs on the (b) DBLP dataset.
    The orange dots indicate the InfoNCE loss and the blue dots indicate the loss function adopted by AdaMEOW.
    }
    }
    \label{fig:con}
  \end{figure}

For simplicity,
we define the function $\mathcal{F}$ to count the number of times that the sample $x_j$ and the anchor $x_i$ are in different clusters.
% \begin{equation}
% \begin{aligned}
%         \gamma_{ij} = \mathcal{F}_{ij}(C_{1},C_{2},\cdots,C_{M}) = 
%         \vert \enspace 
%         t_{r}^i \neq t_{r}^j 
%         \enspace \vert, \\
%     t_{r}^i \in C_r^i,\,
%     t_{r}^j \in C_r^j,\,
%     r = 1,2,\cdots,M, 
% \end{aligned}
% \end{equation}
% where $C_{r}$ is the $r$-th clustering result,
% $C_r^i$ is the cluster that contains node $x_i$, 
% and $t_r^i$ is the cluster index of node $x_i$ in the $r$-th node clustering.
% we calculate the negative sample weights $\gamma_{ij}=t$ based on the number of times $t$ that the sample $x_j$ and the anchor $x_i$ are in different clusters.
We denote $\gamma_{ij} = \mathcal{F}_{ij}(C_{1},C_{2},\cdots,C_{M})$,
where $C_{r}$ is the $r$-th clustering result.
In particular,
we can understand
$\gamma_{ij}$ as the push strength.
For false negatives,
$\gamma_{ij}$ should be small to ensure that they will not be pushed away from the anchor.
For hard negatives,
$\gamma_{ij}$ is expected to be much larger because 
in this way, the anchor and hard negatives can be discriminated.
Since easy samples are distant from the anchor,
the model will be insensitive to
$\gamma_{ij}$ 
in a wide range of values.
Then
based on $\gamma_{ij}$,
% In this way, 
we can formulate our contrastive loss function as
% based on InfoNce:
\begin{equation}
    \mathcal{L}^{con}_i = - \log \frac{ {\rm exp}(\overline{z}_i^{c} \cdot \overline{z}_i^{f}/ \tau)}
    {\sum\nolimits_{j=1}^{\vert V \vert} \gamma_{ij} \, {\rm exp}(\overline{z}^{c}_i \cdot \overline{z}_j^{f}/ \tau)}
    \label{eq:meow_con}
\end{equation}
where $\tau$ is a temperature parameter.

{
Similar as Theorem~\ref{theorem1},
we also analyze our loss function from the perspective of gradient in the back propagation process.

\begin{theorem}
    For the proposed loss function in Equation~\ref{eq:meow_con},
    we use node embedding function $f(\cdot)$ to overload $\bar{z}$.
    Then given an anchor node $x_i$ 
    with positive sample
    $x_k$ and one of its
    negative sample $x_t$,
    the gradient of $\mathcal{L}^{con}_i$ w.r.t. $f(x)$ at $x_t$
    is 
    \begin{equation}
    \nonumber
    \frac{\partial \mathcal{L}^{con}_i}{\partial f(x)} \bigg\vert_{x=x_t} = \frac{\exp(f(x_i)^Tf(x_t) / \tau)}
{ \sum_{j=1}^n \gamma_{ij} \exp(f(x_i)^Tf(x_j) / \tau)} 
\frac{\gamma_{it}}{\tau}
f(x_i)
\end{equation}
% Compared with the InfoNCE loss,
% our loss function can be further optimized by the learned weights compared to InfoNCE loss.
\end{theorem}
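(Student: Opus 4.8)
The plan is to follow exactly the gradient computation used in the proof of Theorem~\ref{theorem1}, the only genuinely new element being the per-pair weights $\gamma_{ij}$, which are produced by the clustering step and must therefore be treated as constants carrying no gradient with respect to the embeddings. First I would unfold the loss of Equation~\ref{eq:meow_con} using $\log(a/b)=\log a-\log b$ and $\log\exp(u)=u$, which gives
\[
\mathcal{L}^{con}_i = -\frac{1}{\tau}\,f(x_i)^T f(x_k) + \log\Big[\sum_{j=1}^{n}\gamma_{ij}\exp\big(f(x_i)^T f(x_j)/\tau\big)\Big].
\]
Since $x_t$ is a negative sample it is distinct from the positive $x_k$, so the first term is independent of $f(x_t)$ and, inside the sum in the second term, only the $j=t$ summand depends on $f(x_t)$.

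Next I would apply the chain rule to the logarithm. Writing $S=\sum_{j=1}^{n}\gamma_{ij}\exp(f(x_i)^T f(x_j)/\tau)$ for the denominator of Equation~\ref{eq:meow_con}, we have $\partial \mathcal{L}^{con}_i/\partial f(x_t) = S^{-1}\,\partial S/\partial f(x_t)$. Differentiating $S$, every term with $j\neq t$ vanishes, and the $j=t$ term contributes $\gamma_{it}\exp(f(x_i)^T f(x_t)/\tau)\cdot\frac{1}{\tau}f(x_i)$, using that $\gamma_{it}$ is a scalar constant and that $\partial\big(f(x_i)^T f(x_t)\big)/\partial f(x_t)=f(x_i)$. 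Multiplying by $S^{-1}$ yields precisely
\[
\frac{\partial \mathcal{L}^{con}_i}{\partial f(x)}\bigg|_{x=x_t} = \frac{\exp(f(x_i)^T f(x_t)/\tau)}{\sum_{j=1}^{n}\gamma_{ij}\exp(f(x_i)^T f(x_j)/\tau)}\,\frac{\gamma_{it}}{\tau}\,f(x_i),
\]
which is the claimed formula.

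I do not expect a real obstacle here; the calculation is the one from Theorem~\ref{theorem1} with the $\gamma$ weights inserted. The two points that should be stated carefully are (i) that $\gamma_{it}$ is detached from the computational graph, so it acts as a fixed multiplier and does not generate an extra derivative term, and (ii) that the overloading of $\bar z$ by $f$ is to be read as $f(x_i)=\overline{z}_i^{c}$ (the coarse-view anchor) and $f(x_t)=\overline{z}_t^{f}$ (the fine-grained embedding of the negative), so that $f(x_t)$ occurs exactly once in the loss. The substantive payoff, to be noted after the derivation, is that $\gamma_{it}$ rescales the InfoNCE gradient of $x_t$ by the same factor that multiplies its exponential in the denominator, so the model can directly modulate how strongly each negative is pushed away --- weakening the push on suspected false negatives and strengthening it on hard ones.
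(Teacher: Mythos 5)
Your proposal is correct and follows essentially the same route as the paper: both simplify the loss via $\log(a/b)=\log a - \log b$, observe that only the $j=t$ summand in the weighted denominator depends on $f(x_t)$, and apply the chain rule with $\gamma_{it}$ treated as a constant multiplier to obtain the stated gradient. Your added remarks on detaching $\gamma_{it}$ from the computational graph and on how $f$ overloads the coarse-view anchor versus the fine-grained negatives are reasonable clarifications but do not change the argument.
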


\begin{proof}
Similar as the InfoNCE loss, 
our proposed loss function $\mathcal{L}^{con}_i$ for anchor $x_i$ can be simplified as:
\begin{equation}
\begin{aligned}
\mathcal{L}^{con}_i
&=
-f(x_i)^Tf(x_k) / \tau
+
\log [ \sum_{j=1}^n \gamma_{ij} \exp(f(x_i)^Tf(x_j) / \tau) ]
\end{aligned}
\end{equation}
The gradient of the positive sample is the same as the InfoNCE loss, 
and the derivative of the representation of the negative sample $x_t$ can be obtained as:
\begin{equation}
\begin{aligned}
\frac{\partial \mathcal{L}^{con}_i}{\partial f(x)} \bigg\vert_{x=x_t}
&=
\frac{\exp(f(x_i)^Tf(x_t) / \tau)}
{ \sum_{j=1}^n \gamma_{ij} \exp(f(x_i)^Tf(x_j) / \tau)} 
\frac{\gamma_{it}}{\tau}
f(x_i) \\
\end{aligned}
\label{eq:theorem2}
\end{equation}
The gradient magnitude now has an additional learnable parameter $\gamma_{it}$ for $x_t$, 
which assigns personalized weights to negative samples sharing the same similarity with the anchor.

\end{proof}

}

Compared to the original InfoNCE loss, 
our proposed loss function relies not only on the similarity between anchor and negative samples, 
but also on the characterization of anchor and negative samples during the optimization process.
This can be further combined with the characterization of node pairs to adaptively adjust the push strength in the hidden space, 
thus improving the quality of the representation.
For example, 
negative samples  $x_2 (1,0)$ and $x_3 (0,1)$ have the same similarity values with anchor $x_1 (1,1)$. The learnable weights $\gamma_{12}$ and $\gamma_{13}$ makes them more distinguishable,
and the gradients of the two are different during backpropagation.

To further 
% improve the clustering results' performance and 
make embeddings of nodes in the same 
cluster more compactly distributed in the latent space,
we introduce an additional prototypical contrastive learning loss function.
In the $r$-th clustering,
we consider the prototype vector $c_i^r$, i.e., the cluster center, 
corresponding to node $x_i$ as a positive sample and other prototype vectors as negative samples
and define:
\begin{equation}
    \mathcal{L}_i^{proto} = - \frac{1}{M} \sum_{r=1}^{M} \log \frac{\exp(\overline{z}_i^{c} \cdot c_i^r / \theta_i^r)}{\sum\nolimits_{j=1}^{k_r} \exp(\overline{z}_i^{c} \cdot c_j^r / \theta_j^r) }
    \label{eq:meow_pro_con}
\end{equation}
where 
% $c_i^r$ is prototype vector 
$\theta_i^r$ is a temperature parameter and
represents the concentration estimate of the cluster $C^i_r$ that contains node $x_i$. 
% We calculate $\theta_i^m$ by computing the average
% number of nodes in the cluster and the
% distance from nodes in the cluster to the prototype vector.
Following~\cite{li2020prototypical},
we calculate
$\theta_i^r = \frac{\sum_{q=1}^Q\Vert \overline{z}_q^c - c^r_i \Vert _2}{Q\log(Q+\alpha)}$, 
where $Q$ is the number of nodes in the cluster
and $\alpha$ is a smoothing parameter to ensure that small clusters do not have an overly-large $\theta$. 
Finally, 
we formulate our objective function
$\mathcal{L}$ as:
\begin{equation}
    \mathcal{L}_{\name} = \frac{1}{\vert V \vert} \sum_{x_i \in V}(\mathcal{L}^{con}_i +\lambda \mathcal{L}^{proto}_i)
    \label{meow_loss}
\end{equation}
where
$V$ is the set of target nodes and 
$\lambda$ controls the relative importance of the two terms.
The loss function can be optimized by stochastic gradient descent. 
To prevent overfitting,
we further regularize all the weight matrices $W$ mentioned above.{
The whole training procedure of the \name\ model is summarized in Algorithm \ref{alg:meow}.
}

{

\subsection{The \nameplus\ model}

\begin{figure}
    \centering
\includegraphics[width=0.4\textwidth]{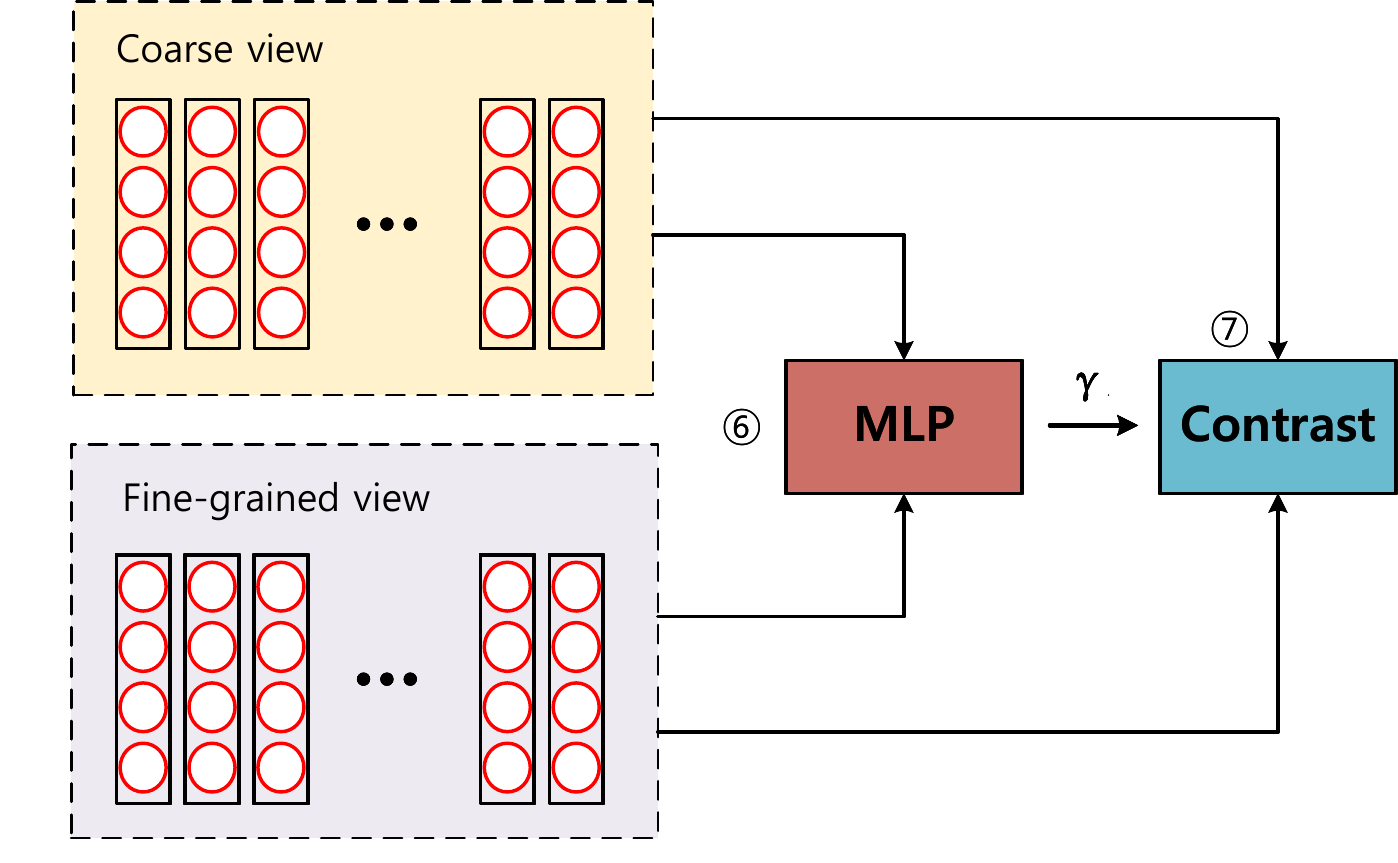}
    \caption{
    {
    The contrastive part of the~\nameplus~model.
    }
    }
    \label{fig:model_2}
  \end{figure}

The weights we calculate in \name~based on the clustering results 
can actually provide 
more fine-grained differentiations of negative samples.
% are actually a more fine-grained differentiation of samples based on similarity to get the weights of negative sample pairs, 
% which help to differentiate the different roles of negative samples to some extent.
% We calculate weights based on the clustering results of the \name\ model, 
% which can help distinguish the different roles of negative samples to some extent. 
However, 
these weights are hard values and could limit node representation learning.
% ability of .
% The weight calculated from the representation based on the clustering results in the \name\ model is a hard value,
% which can limit the node representation learning capability.
Therefore, 
it is necessary to explore alternative approaches that can better facilitate flexible weight calculations, 
thereby enhancing the overall performance of the model.
Instead of clustering,
we propose an enhanced model \nameplus\ with an adaptive approach to learn soft-valued weights.
Specifically,
we apply a \emph{two-layer MLP} to learn the weights according to the node representations under both the coarse and fined-grained views,
which is formally formulated as:
\begin{equation}
    \widetilde{\gamma_{ij}} = 
    \sigma^{(2)}
    (W_{Ada}^{(2)}(\sigma^{(1)}
    (W_{Ada}^{(1)}\mathcal{H}  (\overline{z}_i^{c}, \overline{z}_j^{f}) 
    + b_{Ada}^{(1)}))
    + b_{Ada}^{(2)}),
\end{equation}
where $W_{Ada}^{(1)},W_{Ada}^{(2)}$ are the learnable parameter matrices and
$b_{Ada}^{(1)},b_{Ada}^{(2)}$ are the bias vectors.
Note that
$\sigma^{(1)}$ is the Tanh function and $\sigma^{(2)}$ is the sigmoid function,
which
% This 
ensures that the weights are soft values ranging from 0 and 1.
$\mathcal{H}  (\overline{z}_i^{c}, \overline{z}_j^{f})$ is a pooling function between $\overline{z}_i^{c}$ under the coarse view and $\overline{z}_i^{f}$ under the fine-grained view,
and
% e.g., CONCAT, SUM, MEAN.
% To save space complexity as well as information integrity, 
we use SUM as the pooling function.
The overall objective 
% is similar as Equation~\ref{eq:meow_con},
% which 
is given by:
\begin{equation}
    \mathcal{L}_{Ada}
    % = \frac{1}{\vert V \vert} \sum_{x_i \in V}\mathcal{L}^{con}_i 
    = \frac{1}{\vert V \vert} \sum_{x_i \in V} - \log \frac{ {\rm exp}(\overline{z}_i^{c} \cdot \overline{z}_i^{f}/ \tau)}
    {\sum\nolimits_{j=1}^{\vert V \vert} \widetilde{\gamma_{ij}} \, {\rm exp}(\overline{z}^{c}_i \cdot \overline{z}_j^{f}/ \tau)},
    \label{eq:ada}
\end{equation}
where $\widetilde{\gamma_{ij}}$ is a soft-valued weight between anchor $x_i$ and negative sample $x_j$.
We distinguish it from the hard-valued weight by using the tilde $(\thicksim)$ symbol.
For each anchor node,
the two-layer MLP
can adaptively learn the weights of negative samples,
and further lead to 
more informative gradients for them.
% we designed further extracts information from the representation,
% effectively learning the importance of each dimension in the representation, which is a dynamic process of learning adaptive weights.
% The model can self-judge the type of negative samples from the representation:false negative samples, easy negatives and hard negatives.
% This results in a more informative gradient magnitude, replacing the original loss function InfoNCE of solely relying on similarity to determine gradient magnitude. 
As shown by the blue dots in Figure~\ref{fig:con}, 
optimizing Equation~\ref{eq:ada} allows for a more diverse set of gradient magnitudes for the same node similarity values.
This further shows that 
$\widetilde{\gamma_{ij}}$
can capture the individual characteristics of different negative samples
and their corresponding gradients are not only determined by the similarity with the anchor.
The contrastive part of the \nameplus\ model is summarized in Algorithm \ref{alg:model}.
}

\subsection{Complexity analysis}
The major time complexity in our proposed model comes from GCN and MLP.
Let $d_{max}$ be the maximum initial dimensions of different types of nodes and $d_A$ be the average number of non-zero entries in each row of the adjacency matrix for each meta-path induced graph.
In Section~\ref{sec:4.1},
The time complexity for MLP is $O(Bd_{max}d)$ where $B$ denotes the batch size and $d$ is the dimension of the projected initial embedding vector.
The GCN encoder used in the construction of the two views has a time complexity of $O(Bd_Ad+Bd^2)$.
After constructing the views,
the time complexity of the contrastive loss function is $O(B^2d)$.
For~\name,
node clustering requires $O(t(k_1 + k_2 \dots + k_M)|V|d)$,
where $t$ is the iteration times and node clustering times $M\le2$
in our experiments.
For~\nameplus,
there is an additional MLP with time complexity $O(B^2d^2)$.

\begin{algorithm}[htb] 
\caption{ The \name\ model} 
\label{alg:meow}
\begin{algorithmic}[1]

{
\REQUIRE ~~\\
    {The heterogeneous graph $\mathcal{G} = (\mathcal{V},\mathcal{E})$; 
    the number of node type $\vert \mathcal{T} \vert$;
    the number of target node $|V|$;
    the feature matrix $X_1,X_2,\cdots,X_{\vert \mathcal{T} \vert}$;
    a pre-defined meta-path set $\mathcal{PS}$;
    the number of clusters $U=\{k_1,k_2,\cdots,k_M\}$;}
\ENSURE ~~\\
    Target node embeddings for downstream tasks.
    
 \STATE  $//$ Pre-Process.
 
\FORALL{$\mathcal{P}_u \in \mathcal{PS} $} 
\STATE{Calculate $\mathcal{P}_u$-induced \emph{PathSim} scores based on Eq. 2};
\STATE{Filter low-impact neighbors and obtain $A^{\mathcal{P}_u}$};
\STATE{Compute the normalized adjacency matrix: $\widetilde{A^{\mathcal{P}_u}} = D^{-1/2}A^{\mathcal{P}_u}D^{-1/2}
$};
\ENDFOR
\STATE Calclaute $
\widetilde{A} = \frac{1}{m} ( \widetilde{A^{\mathcal{P}_1}} + \widetilde{A^{\mathcal{P}_2}} + \cdots +  \widetilde{A^{\mathcal{P}_m}}) $;

 \STATE  $\triangleright$ Lines 9-23 correspond to one epoch
\STATE Transform node feature and get $h_i$ in coarse view using Eq. 1;

 \STATE  $//$ Coarse view

\STATE Calculate representations $z_i^c$ in coarse view using Eq. 3;

 \STATE  $//$ Fine-grained view

\STATE Aggregate meta-contexts to $h_i^{P_u}$ under a meta-path using Eq. 4;

\STATE Calculate representations $z_i^{P_u}$ under a meta-path using Eq. 5;

\STATE Use attention mechanism to generate representations $z_i^f$ under fine-grained view using Eq. 6;

\STATE Map the representations under two views to contrastive space and get $ \overline{z}_i^{c},  \overline{z}_i^{f} $ using Eq. 7;

 \STATE  $//$ Contrastive part

\FOR{ $ r = 1$ \TO $ M$ } 
    \STATE Cluster embedding $\overline{Z}^{f}$ under fine-grained view into $k_r$ clusters and get result $C_r$;
    \STATE Calculate temperature parameter $\theta^r$;
\ENDFOR

\STATE Construct $\mathcal{L}^{con}, \mathcal{L}^{proto},\mathcal{L}$ using Eq. 10-12;

\STATE Optimize $\mathcal{L}$ to update all parameters in the model.

\RETURN $\{z_i^f\}^{|V|}_{i=1} $;                %算法的返回值 
}
\end{algorithmic} 

\end{algorithm}

\begin{algorithm}[htb] 
\caption{The \nameplus\ model}   
\label{alg:model}
\begin{algorithmic}[1]
{
\REQUIRE ~~\\
    {The heterogeneous graph $\mathcal{G} = (\mathcal{V},\mathcal{E})$; 
    the number of node type $\vert \mathcal{T} \vert$;
    the number of target node $|V|$;
    the feature matrix $X_1,X_2,\cdots,X_{\vert \mathcal{T} \vert}$;
    a pre-defined meta-path set $\mathcal{PS}$;}
\ENSURE ~~\\
    Target node embeddings for downstream tasks.

\STATE $\triangleright$  The same steps as in Algorithm 1, from line 1 to line 16;
 \STATE  $//$ Contrastive part

 \STATE  Calculate the weights of negative sample pairs $\gamma_{ij}$ using Eq. 13;
 \STATE  Construct $\mathcal{L}$ based on weights using Eq. 14;
 \STATE Optimize $\mathcal{L}$ to update all parameters in the model.
\RETURN $\{z_i^f\}^{|V|}_{i=1} $; 
}
\end{algorithmic} 
\end{algorithm} 
\section{Experiments}
\subsection{Datasets}
To evaluate the performance of \name,
we employ four real-world datasets: ACM~\cite{zhao2020network}, DBLP~\cite{fu2020magnn}, Aminer~\cite{hu2019adversarial} and IMDB~\cite{luo2021detecting}.
% DBLP is a bibliographic network of academic publications.
The four datasets are benchmark HINs.
We next define a classification task for each dataset.

\noindent{\small$\bullet$} \textbf{ACM:}
% \begin{itemize}
% \item \textbf{ACM}: 
ACM is an academic paper dataset.
The dataset contains 4019 papers (P), 7167 authors (A), and 60 subjects (S).
Links include P-A (an author publishes a paper) and P-S (a paper is based on a subject).
We use PAP and PSP as meta-paths.
% related baselines.
Paper features are the bag-of-words representation of keywords. 
% Papers are labeled according to their conferences.
Our task is to classify papers into three areas: database, wireless communication, and data mining.

\noindent{\small$\bullet$} \textbf{DBLP:}
% \item \textbf{DBLP}: 
DBLP is extracted from the computer science bibliography website.
The dataset contains 4057 authors (A), 14328 papers (P), 20 conferences (C) and 7723 terms (T).
Links include A-P (an author publishes a paper), P-T (a paper contains a term) and P-C (a paper is published on a conference).
We consider the meta-path set $\{$APA, APCPA, APTPA$\}$.
Each author is described by a
bag-of-words vector of their paper keywords. 
Our task is to classify authors into four research areas: 
Database, Data Mining, Artificial Intelligence and Information Retrieval.

% \item \textbf{Freebase}: 
% \item \textbf{AMiner}: 
\noindent{\small$\bullet$} \textbf{AMiner:}
Aminer is a bibliographic graphs.
The dataset contains 6564 papers (P), 13329 authors (A) and 35890 references (R).
Links include P-A (an author publishes a paper) and P-R (a reference for a paper).
We consider the meta-path set $\{$PAP, PRP$\}$.
% Each author is described by a bag-of-words representation of their paper keywords. 
Our task is to classify papers into four research areas.
% The target nodes are papers. We extract a subset of original dataset, where papers are divided into four classes. 
% \item \textbf{IMDB}:

\begin{table*}[t]
  \caption{Quantitative results (\%$\pm\sigma$) on node classification. We highlight the best score on each dataset in bold and the runner-up score with underline.}
  \label{classification}
  \resizebox{\textwidth}{!}{
  \begin{tabular}{c|c|c|ccccccccc|cc}
    \hline
    Datasets & Metric & Split & GraphSAGE & GAE & Mp2vec & HERec & HetGNN & HAN & DGI & DMGI & HeCo & \name & \nameplus \\
    \hline
    \multirow{9}{*}{ACM}&
    \multirow{3}{*}{Ma-F1}
    &20&47.13$\pm$4.7&62.72$\pm$3.1&51.91$\pm$0.9&55.13$\pm$1.5&72.11$\pm$0.9&85.66$\pm$2.1&79.27$\pm$3.8&87.86$\pm$0.2&88.56$\pm$0.8&
    \underline{91.93$\pm$0.3}&
    \yjx{\textbf{93.44$\pm$0.2}}\\
    &&40&55.96$\pm$6.8&61.61$\pm$3.2&62.41$\pm$0.6&61.21$\pm$0.8&72.02$\pm$0.4&87.47$\pm$1.1&80.23$\pm$3.3&86.23$\pm$0.8&87.61$\pm$0.5&
    \underline{91.35$\pm$0.3}&
    \yjx{\textbf{92.36$\pm$0.2}}\\
    &&60&56.59$\pm$5.7&61.67$\pm$2.9&61.13$\pm$0.4&64.35$\pm$0.8&74.33$\pm$0.6&88.41$\pm$1.1&80.03$\pm$3.3&87.97$\pm$0.4&89.04$\pm$0.5&\underline{92.10$\pm$0.3}&
    \yjx{\textbf{93.34$\pm$0.1}}\\
    \cline{2-14}
    &\multirow{3}{*}{Mi-F1}
    &20&49.72$\pm$5.5&68.02$\pm$1.9&53.13$\pm$0.9&57.47$\pm$1.5&71.89$\pm$1.1&85.11$\pm$2.2&79.63$\pm$3.5&87.60$\pm$0.8&88.13$\pm$0.8&\underline{91.82$\pm$0.3}&
    \yjx{\textbf{93.31$\pm$0.2}}\\
    &&40&60.98$\pm$3.5&66.38$\pm$1.9&64.43$\pm$0.6&62.62$\pm$0.9&74.46$\pm$0.8&87.21$\pm$1.2&80.41$\pm$3.0&86.02$\pm$0.9&87.45$\pm$0.5&\underline{91.33$\pm$0.3}&
    \yjx{\textbf{92.50$\pm$0.3}}\\
    &&60&60.72$\pm$4.3&65.71$\pm$2.2&62.72$\pm$0.3&65.15$\pm$0.9&76.08$\pm$0.7&88.10$\pm$1.2&80.15$\pm$3.2&87.82$\pm$0.5&88.71$\pm$0.5&
    \underline{91.99$\pm$0.3}&
    \yjx{\textbf{93.24$\pm$0.2}}\\
    \cline{2-14}
    &\multirow{3}{*}{AUC}
    &20&65.88$\pm$3.7&79.50$\pm$2.4&71.66$\pm$0.7&75.44$\pm$1.3&84.36$\pm$1.0&93.47$\pm$1.5&91.47$\pm$2.3&96.72$\pm$0.3&96.49$\pm$0.3&\underline{98.43$\pm$0.2}&
    \yjx{\textbf{99.00$\pm$0.0}}\\
    &&40&71.06$\pm$5.2&79.14$\pm$2.5&80.48$\pm$0.4&79.84$\pm$0.5&85.01$\pm$0.6&94.84$\pm$0.9&91.52$\pm$2.3&96.35$\pm$0.3&96.40$\pm$0.4&\underline{97.94$\pm$0.1}&
    \yjx{\textbf{98.64$\pm$0.0}}\\
    &&60&70.45$\pm$6.2&77.90$\pm$2.8&79.33$\pm$0.4&81.64$\pm$0.7&87.64$\pm$0.7&94.68$\pm$1.4&91.41$\pm$1.9&96.79$\pm$0.2&96.55$\pm$0.3&\underline{98.40$\pm$0.2}&
    \yjx{\textbf{98.60$\pm$0.1}}\\
    \hline
    \multirow{9}{*}{DBLP}&
    \multirow{3}{*}{Ma-F1}
    &20&71.97$\pm$8.4&90.90$\pm$0.1&88.98$\pm$0.2&89.57$\pm$0.4&89.51$\pm$1.1&89.31$\pm$0.9&87.93$\pm$2.4&89.94$\pm$0.4&91.28$\pm$0.2&\underline{92.57$\pm$0.4}&
    \yjx{\textbf{93.47$\pm$0.2}}\\
    &&40&73.69$\pm$8.4&89.60$\pm$0.3&88.68$\pm$0.2&89.73$\pm$0.4&88.61$\pm$0.8&88.87$\pm$1.0&88.62$\pm$0.6&89.25$\pm$0.4&90.34$\pm$0.3&\underline{91.47$\pm$0.2}&
    \yjx{\textbf{92.37$\pm$0.2}}\\
    &&60&73.86$\pm$8.1&90.08$\pm$0.2&90.25$\pm$0.1&90.18$\pm$0.3&89.56$\pm$0.5&89.20$\pm$0.8&89.19$\pm$0.9&89.46$\pm$0.6&90.64$\pm$0.3&\underline{93.49$\pm$0.2}&
    \yjx{\textbf{94.00$\pm$0.2}}\\
    \cline{2-14}
    &\multirow{3}{*}{Mi-F1}
    &20&71.44$\pm$8.7&91.55$\pm$0.1&89.67$\pm$0.1&90.24$\pm$0.4&90.11$\pm$1.0&90.16$\pm$0.9&88.72$\pm$2.6&90.78$\pm$0.3&91.97$\pm$0.2&\underline{93.06$\pm$0.4}&
    \yjx{\textbf{93.89$\pm$0.3}}\\
    &&40&73.61$\pm$8.6&90.00$\pm$0.3&89.14$\pm$0.2&90.15$\pm$0.4&89.03$\pm$0.7&89.47$\pm$0.9&89.22$\pm$0.5&89.92$\pm$0.4&90.76$\pm$0.3&\underline{91.77$\pm$0.2}&
    \yjx{\textbf{92.64$\pm$0.2}}\\
    &&60&74.05$\pm$8.3&90.95$\pm$0.2&91.17$\pm$0.1&91.01$\pm$0.3&90.43$\pm$0.6&90.34$\pm$0.8&90.35$\pm$0.8&90.66$\pm$0.5&91.59$\pm$0.2&\underline{94.13$\pm$0.2}&
    \yjx{\textbf{94.59$\pm$0.2}}\\
    \cline{2-14}
    &\multirow{3}{*}{AUC}
    &20&90.59$\pm$4.3&98.15$\pm$0.1&97.69$\pm$0.0&98.21$\pm$0.2&97.96$\pm$0.4&98.07$\pm$0.6&96.99$\pm$1.4&97.75$\pm$0.3&98.32$\pm$0.1&\underline{99.09$\pm$0.1}&
    \yjx{\textbf{99.11$\pm$0.1}}\\
    &&40&91.42$\pm$4.0&97.85$\pm$0.1&97.08$\pm$0.0&97.93$\pm$0.1&97.70$\pm$0.3&97.48$\pm$0.6&97.12$\pm$0.4&97.23$\pm$0.2&98.06$\pm$0.1&\underline{98.81$\pm$0.1}&
    \yjx{\textbf{99.09$\pm$0.0}}\\
    &&60&91.73$\pm$3.8&98.37$\pm$0.1&98.00$\pm$0.0&98.49$\pm$0.1&97.97$\pm$0.2&97.96$\pm$0.5&97.76$\pm$0.5&97.72$\pm$0.4&98.59$\pm$0.1&\underline{99.41$\pm$0.0}&
    \yjx{\textbf{99.41$\pm$0.1}}\\
    \hline
    \multirow{9}{*}{AMiner}&
    \multirow{3}{*}{Ma-F1}
    &20&42.46$\pm$2.5&60.22$\pm$2.0&54.78$\pm$0.5&58.32$\pm$1.1&50.06$\pm$0.9&56.07$\pm$3.2&51.61$\pm$3.2&59.50$\pm$2.1&\underline{71.38$\pm$1.1}&71.09$\pm$0.3&
    \yjx{\textbf{71.41$\pm$0.7}}\\
    &&40&45.77$\pm$1.5&65.66$\pm$1.5&64.77$\pm$0.5&64.50$\pm$0.7&58.97$\pm$0.9&63.85$\pm$1.5&54.72$\pm$2.6&61.92$\pm$2.1&\underline{73.75$\pm$0.5}&70.40$\pm$0.2&
    \yjx{\textbf{73.88$\pm$1.1}}\\
    &&60&44.91$\pm$2.0&63.74$\pm$1.6&60.65$\pm$0.3&65.53$\pm$0.7&57.34$\pm$1.4&62.02$\pm$1.2&55.45$\pm$2.4&61.15$\pm$2.5&\textbf{75.80$\pm$1.8}&72.82$\pm$0.5&
    \yjx{\underline{72.93$\pm$0.2}}\\
    \cline{2-14}
    &\multirow{3}{*}{Mi-F1}
    &20&49.68$\pm$3.1&65.78$\pm$2.9&60.82$\pm$0.4&63.64$\pm$1.1&61.49$\pm$2.5&68.86$\pm$4.6&62.39$\pm$3.9&63.93$\pm$3.3&\underline{78.81$\pm$1.3}&78.03$\pm$0.2&
    \yjx{\textbf{79.31$\pm$1.0}}\\
    &&40&52.10$\pm$2.2&71.34$\pm$1.8&69.66$\pm$0.6&71.57$\pm$0.7&68.47$\pm$2.2&76.89$\pm$1.6&63.87$\pm$2.9&63.60$\pm$2.5&\underline{80.53$\pm$0.7}&76.77$\pm$0.2&
    \yjx{\textbf{81.76$\pm$1.3}}\\
    &&60&51.36$\pm$2.2&67.70$\pm$1.9&63.92$\pm$0.5&69.76$\pm$0.8&65.61$\pm$2.2&74.73$\pm$1.4&63.10$\pm$3.0&62.51$\pm$2.6&\textbf{82.46$\pm$1.4}&78.88$\pm$0.3&
    \yjx{\underline{79.16$\pm$0.5}}\\
    \cline{2-14}
    &\multirow{3}{*}{AUC}
    &20&70.86$\pm$2.5&85.39$\pm$1.0&81.22$\pm$0.3&83.35$\pm$0.5&77.96$\pm$1.4&78.92$\pm$2.3&75.89$\pm$2.2&85.34$\pm$0.9&90.82$\pm$0.6&\textbf{92.89$\pm$0.1}&
    \yjx{\underline{91.02$\pm$1.0}}\\
    &&40&74.44$\pm$1.3&88.29$\pm$1.0&88.82$\pm$0.2&88.70$\pm$0.4&83.14$\pm$1.6&80.72$\pm$2.1&77.86$\pm$2.1&88.02$\pm$1.3&92.11$\pm$0.6&\textbf{92.88$\pm$0.1}&
    \yjx{\underline{92.72$\pm$1.0}}\\
    &&60&74.16$\pm$1.3&86.92$\pm$0.8&85.57$\pm$0.2&87.74$\pm$0.5&84.77$\pm$0.9&80.39$\pm$1.5&77.21$\pm$1.4&86.20$\pm$1.7&92.40$\pm$0.7&\underline{92.51$\pm$0.2}&
    \yjx{\textbf{92.72$\pm$0.5}}\\
    \hline

    \multirow{9}{*}{\yjx{IMDB}}&
    \multirow{3}{*}{\yjx{Ma-F1}}

    &\yjx{20}&
    \yjx{38.71$\pm$0.9} & 
    \yjx{46.72$\pm$1.2} & 
    \yjx{44.62$\pm$0.7} & 
    \yjx{43.28$\pm$1.1} & 
    \yjx{53.89$\pm$1.2} & 
    \yjx{45.97$\pm$0.4} & 
    \yjx{50.63$\pm$0.2} & 
    \yjx{55.95$\pm$1.4} & 
    \yjx{55.35$\pm$0.5} &
    \yjx{\underline{56.89$\pm$0.6}}  & \yjx{\textbf{62.91$\pm$0.6}} \\

    &&\yjx{40}& 
    \yjx{36.91$\pm$0.7} & 
    \yjx{52.21$\pm$0.7} & 
    \yjx{50.10$\pm$0.8} & 
    \yjx{50.21$\pm$1.5} & 
    \yjx{53.11$\pm$0.8} &
    \yjx{51.83$\pm$0.6} & 
    \yjx{52.36$\pm$0.3} & 
    \yjx{53.75$\pm$1.1} & 
    \yjx{57.09$\pm$0.8} & \yjx{\underline{58.19$\pm$0.4}} & \yjx{\textbf{58.96$\pm$0.6}} \\
    
    &&\yjx{60}&
    \yjx{37.28$\pm$1.0} & 
    \yjx{51.78$\pm$0.6} & 
    \yjx{53.94$\pm$0.5} & 
    \yjx{49.65$\pm$1.0} & 
    \yjx{55.57$\pm$0.7} &
    \yjx{52.43$\pm$0.6} &
    \yjx{53.09$\pm$0.4} & 
    \yjx{58.00$\pm$0.9} & 
    \yjx{56.74$\pm$0.7} & \yjx{\underline{60.03$\pm$0.4}} & \yjx{\textbf{62.04$\pm$0.3}} \\
    
    \cline{2-14}
    &\multirow{3}{*}{\yjx{Mi-F1}}

    &\yjx{20}& 
    \yjx{42.43$\pm$1.0} &
    \yjx{47.65$\pm$1.0} & 
    \yjx{45.10$\pm$0.8} & 
    \yjx{45.40$\pm$1.1} & 
    \yjx{55.18$\pm$1.1} &
    \yjx{46.50$\pm$0.6} & 
    \yjx{51.01$\pm$0.2} & 
    \yjx{56.32$\pm$1.4} & 
    \yjx{55.53$\pm$0.6} & \yjx{\underline{57.01$\pm$0.6}} & \yjx{\textbf{63.13$\pm$0.6}} \\
    
    &&\yjx{40}& 
    \yjx{40.63$\pm$1.4} & 
    \yjx{51.74$\pm$0.9} & 
    \yjx{50.13$\pm$0.8} & 
    \yjx{50.57$\pm$1.3} &
    \yjx{53.60$\pm$0.8} &
    \yjx{52.20$\pm$0.6} &
    \yjx{52.56$\pm$0.2} & 
    \yjx{54.13$\pm$1.0} & 
    \yjx{57.45$\pm$0.7} & 
    \yjx{\underline{58.76$\pm$0.4}} & 
    \yjx{\textbf{59.22$\pm$0.6}} \\
    
    &&\yjx{60}&
    \yjx{39.68$\pm$0.6} &  
    \yjx{51.26$\pm$0.8} & 
    \yjx{53.76$\pm$0.6} & 
    \yjx{49.44$\pm$1.0} &
    \yjx{55.47$\pm$0.8} &
    \yjx{52.09$\pm$0.6} & 
    \yjx{52.98$\pm$0.3} & 
    \yjx{57.78$\pm$0.9} & 
    \yjx{56.41$\pm$0.8} & \yjx{\underline{60.32$\pm$0.3}} & \yjx{\textbf{62.09$\pm$0.3}} \\
    
    \cline{2-14}
    &\multirow{3}{*}{\yjx{AUC}}   

    &\yjx{20}&
    \yjx{57.50$\pm$0.9} & 
    \yjx{65.96$\pm$1.0} & 
    \yjx{62.68$\pm$0.5} & 
    \yjx{61.94$\pm$1.1} &
    \yjx{73.15$\pm$0.7} &
    \yjx{64.94$\pm$0.3} & 
    \yjx{68.99$\pm$0.1} & 
    \yjx{74.52$\pm$0.9} & 
    \yjx{73.67$\pm$0.4} & \yjx{\underline{78.36$\pm$0.4}} & \yjx{\textbf{80.12$\pm$0.2}} \\
    
    &&\yjx{40}& 
    \yjx{55.89$\pm$0.7} & 
    \yjx{70.52$\pm$0.8} & 
    \yjx{68.73$\pm$0.5} & 
    \yjx{68.61$\pm$0.9} &
    \yjx{72.74$\pm$0.8} &
    \yjx{68.14$\pm$0.2} &
    \yjx{70.61$\pm$0.1} & 
    \yjx{73.23$\pm$0.8} & 
    \yjx{75.24$\pm$0.5} &
    \yjx{\underline{76.60$\pm$0.3}} &
    \yjx{\textbf{78.29$\pm$0.3}} \\
    
    &&\yjx{60}& 
    \yjx{57.31$\pm$0.4} & 
    \yjx{70.00$\pm$0.2} & 
    \yjx{72.09$\pm$0.3} &
    \yjx{67.95$\pm$0.6} &
    \yjx{73.65$\pm$0.3} &
    \yjx{70.00$\pm$0.2} & 
    \yjx{71.41$\pm$0.1} & 
    \yjx{76.11$\pm$0.6} & 
    \yjx{73.85$\pm$0.4} & \yjx{\underline{77.50$\pm$0.1}} & \yjx{\textbf{79.44$\pm$0.2}} \\
    \hline
  \end{tabular}}
  
\end{table*}

\noindent{\small$\bullet$} \textbf{IMDB:}
As a subset of Internet Movie Database,
the dataset contains 4275 moives (M), 5432 actors (A), 2083 directios (D) and 7313 keywords (K).
Links include M-A (an actor stars in a movie), M-D (a director directs a movie) and M-K (a movie contains a keyword).
We consider the meta-path set $\{$MAM, MDM, MKM$\}$.
Our task is to classify movies into three classes,
i.e., Action, Comedy and Drama.

% \end{itemize}

% \noindent\textbf{Baselines}\quad

% \begin{table}
%   \caption{The statistics of the datasets}
%   \label{statistics}
%   \setlength{\tabcolsep}{1.5mm}{
%   \begin{tabular}{|c|c|c|c|}
%     \hline
%         Dataset & Node & Relation & Meta-path\\
%     \hline
%     ACM & \makecell*[c]{Paper (P):4019\\Author (A):7167\\Subject (S):60} & \makecell*[c]{P-A:13407\\P-S:4019} & \makecell*[c]{PAP\\PSP} \\
%     \hline
%     DBLP & \makecell*[c]{Author (A):4057\\Paper (P):14328\\Conference (C):20\\Term (T):7723} & \makecell*[c]{P-A:19645\\P-C:14328\\P-T:85810} & \makecell*[c]{APA\\APCPA\\APTPA} \\
%     \hline
%     AMiner & \makecell*[c]{Paper (P):6564\\Author (A):13329\\Reference (R):35890} & \makecell*[c]{P-A:18007\\P-R:58831} & \makecell*[c]{PAP\\PRP} \\
%     \hline
%     IMDB & \makecell*[c]{Movie (M):4275\\Actor (A):5432\\Director (D):2083\\Keyword (W):7313} & \makecell*[c]{M-A:65341\\M-D:3762\\M-W:6414} & \makecell*[c]{MAM\\MDM\\MKM} \\
%     \hline
% \end{tabular}}
% \end{table}

\subsection{Baselines}
We compare \name~with 9 other state-of-the-art methods, which can be grouped into three categories:
% \begin{itemize}
    % \item 

\noindent{\small$\bullet$}\textbf{[Methods specially designed for homogeneous graphs]}: 
{GraphSAGE}~\cite{GraphSAGE} aggregates information from a fixed number of neighbors to generate nodes' embedding.
{GAE}~\cite{gae} is a generative method that generates representations by reconstructing the adjacency matrix.
{DGI}~\cite{dgi} maximizes the agreement between node representations and a global summary vector.

    % \item 
\noindent{\small$\bullet$}\textbf{[Semi-supervised learning methods in HINs]}: 
{HAN}~\cite{han} is proposed to learn node representations using node-level and semantic-level attention mechanisms.

    % \item 
\noindent{\small$\bullet$}\textbf{[Unsupervised learning methods in HINs]}: 
{HERec}~\cite{herec} utilizes the skip-gram model on each meta-path to embed induced graphs.
{HetGNN}~\cite{hetgnn} aggregates information from different types of neighbors based on random walk with start.
{DMGI}~\cite{dmgi} constructs contrastive learning between the original network and a corrupted network on each meta-path and adds a consensus regularization to fuse node embeddings from different meta-paths.
{Mp2vec}~\cite{metapath2vec} generates nodes' embedding vectors by performing meta-path-based random walks.
{HeCo}~\cite{heco} constructs two views with meta-paths and network schema to perform contrastive learning across them.
In particular,
HeCo is the state-of-the-art heterogeneous contrastive learning model.
% \end{itemize}

\begin{table}
  \caption{Quantitative results (\%) on node clustering.}
  \label{clustering}
  \resizebox{\columnwidth}{!}{
  \begin{tabular}{c|cc|cc|cc|cc}
    \hline
    Datasets & \multicolumn{2}{c|}{ACM} & \multicolumn{2}{c|}{DBLP} & \multicolumn{2}{c|}{AMiner} & \multicolumn{2}{c}{\yjx{IMDB}}\\
    \hline
    Metrics & NMI & ARI & NMI & ARI & NMI & ARI & \yjx{NMI} & \yjx{ARI}\\
    \hline
GraphSage&29.20&27.72&51.50&36.40&15.74&10.10&\yjx{1.27} &\yjx{1.42}\\
GAE&27.42&24.49&72.59&77.31&28.58&20.90& \yjx{7.79} & \yjx{5.72}\\
Mp2vec&48.43&34.65&73.55&77.70&30.80&25.26&\yjx{9.89}&\yjx{10.92} \\
HERec&47.54&35.67&70.21&73.99&27.82&20.16& \yjx{0.49} & \yjx{0.43}\\
HetGNN&41.53&34.81&69.79&75.34&21.46&26.60&\yjx{11.74} & \yjx{13.24}\\
DGI&51.73&41.16&59.23&61.85&22.06&15.93&\yjx{7.43}&\yjx{5.15}\\
DMGI&51.66&46.64&70.06&75.46&19.24&20.09&\yjx{10.17}&\yjx{9.43}\\
HeCo&56.87&56.94&74.51&80.17&32.26&28.64& \yjx{11.16}&\yjx{12.68}\\

\hline
\name&\underline{66.21}&\underline{71.17}&\underline{75.46}&\underline{81.19}&\underline{33.91}&\textbf{33.81}&\yjx{\underline{14.88}} & \yjx{\underline{15.77}}\\

\nameplus & \textbf{67.74} & \textbf{74.11} & \textbf{77.33} & \textbf{82.30} & \textbf{40.56} & \underline{32.34} &  \yjx{\textbf{15.44}} & \yjx{\textbf{16.29}} \\

    \hline
  \end{tabular}}
\end{table}

\subsection{Experimental Setup}
We implement \name~with PyTorch and adopt the Adam optimizer to train the model.
We fine-tune the learning rate from $\{$5e-4, 6e-4, 7e-4$\}$,
the penalty weight on the ${l}_2$-norm regularizer from \{0, 1e-4, 1e-3\} and the patience for early stopping from 10 to 40 with step size 5, 
i.e., we stop training if the total loss does not decrease for patience consecutive epochs.
% For the dropout rate, 
We set the dropout rate ranging from 0.0 to 0.9, and the temperature {$\tau$} in Eq.~\ref{eq:meow_con} from 0.1 to 1.0, both with step size 0.1.
% neighbor filtering
We set $K$ in the neighbor filtering 
based on the average number of connections of all the objects under each meta-path.
% When the average value is large, 
% we test with a large step size.
% data augmentation
For data augmentation,
we fine-tune the masking rate for both features and edges from 0.0 to 0.6 with step size 0.1.
% cluster
We perform node clustering twice and set $\alpha=5$ in all datasets.
Further, 
we set the number of clusters $U$ to $\{$100, 300$\}$, $\{$200, 700$\}$, $\{$500, 1200$\}$, and $\{$100, 500$\}$ in ACM, DBLP, Aminer and IMDB, respectively.
% \yjx{For simplicity,
% we calculate the negative sample weights $\gamma_{ij}=t$ based on the number of times $t$ that the sample $x_j$ and the anchor $x_i$ are in different clusters. DELETE}
We fine-tune the regularization weights $\lambda$ in prototypical contrastive learning from $\{$0.1, 1, 10$\}$.
% For every dataset, 
% we use the original attributes of the nodes.
For Aminer,
since nodes are not associated with features,
% do not have feature vectors,
we first run metapath2vec with the default parameter settings from the original codes provided by the authors to construct nodes' initial feature vectors.
For fair comparison, we set the embedding dimension as 64 and randomly run the experiments 10 times, and report the average results for all the methods.
% For random walk-based methods (i.e., Mp2vec, HERec, HetGNN), 
% the number of walks is set to 40, the length is 100, 
% and the window size is set to 5.
% For GraphSAGE,
% GAE, Mp2vec, HERec and DGI, we test all the meta-paths for them
% and report the best performance. In terms of other parameters, we
% follow the settings in their original papers.
% For other graph neural network models, follow the settings of the original document. 
% \li{Experimental results refer to those of HeCo}.
{For other competitors,
their results are directly reported from~\cite{heco}.}
We run all the experiments on a server
with 32G memory and a single Tesla V100 GPU.
We provide our code and data here:
\url{https://github.com/jianxiangyu/MEOW}.

\begin{figure*}[t]
    \centering
    \includegraphics[width=1.0\textwidth]{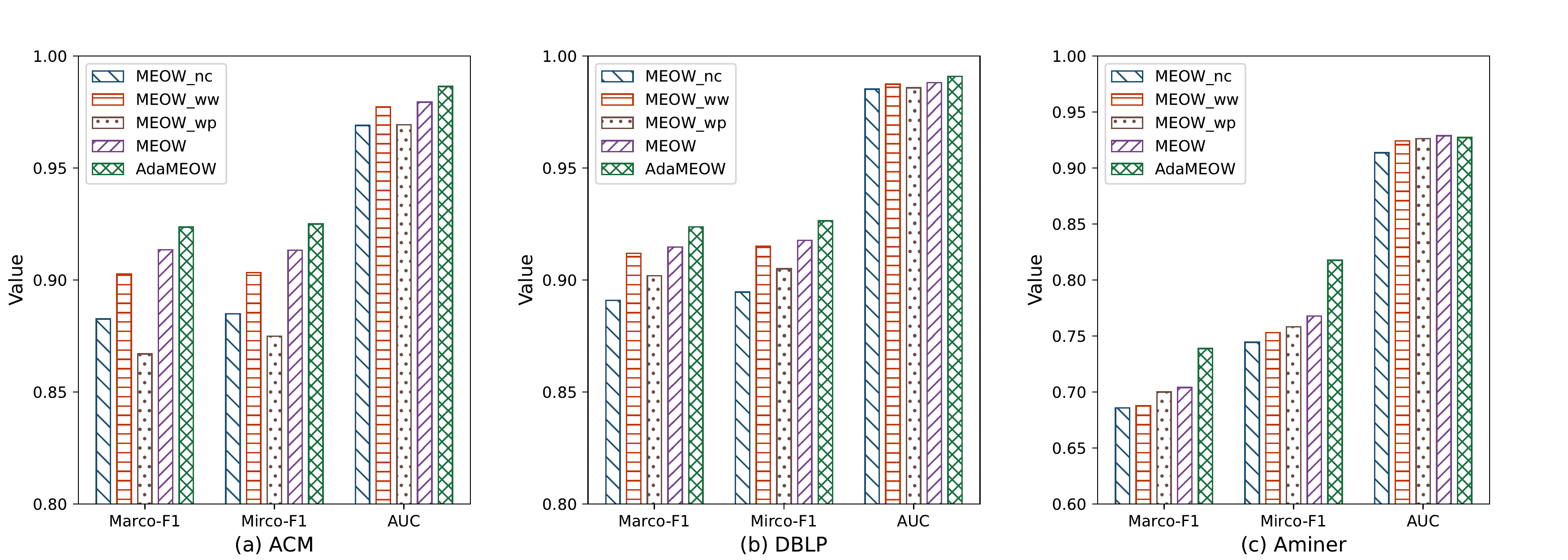}
    \caption{The ablation study results of 40 labeled nodes per class.}
    \label{fig:ablation_all}
\end{figure*}

% \begin{figure}
%   \centering
% \includegraphics[width=0.5\textwidth]{asset/hyper_nei.pdf}
%   \caption{
%   \yjx{
%   Hyper-parameters analysis of the number of selected
% relevant neighbors K.
%     }
% }
% \label{fig:hyper_nei}
% \end{figure}

% \begin{figure}
%   \centering
% \includegraphics[width=0.5\textwidth]{asset/hyper_lambda.pdf}
%   \caption{
%   \yjx{
    %   Hyper-parameters analysis of the relative importance $\lambda$ of the two components of the loss function.
%   }
%   }
%   \label{fig:hyper_lambda}
% \end{figure}

\subsection{Node Classification}
We use the learned node embeddings to train a linear classifier to evaluate our model.
We randomly choose 20, 40, 60 labeled nodes per class as training set, and 1000 nodes as validation set and 1000 nodes for testing.
We use Macro-F1, Micro-F1 and AUC
as evaluation metrics.
For all the metrics,
the larger the value,
the better the model performance.
The results are reported in Table \ref{classification}. 
From the table,
we see that
\name~achieves the suboptimal performance on ACM, DBLP and IMDB, and 
performs very well on Aminer
in all the data splits.
This shows the importance of meta-path contextual information and the validity of the contrastive views we designed.
Compared with the state-of-the-art graph contrastive learning model Heco,
\name~achieves better performance on ACM, DBLP and IMDB.
For example, 
the Macro-F1 score and the Micro-F1 score of Heco is 90.64\% and 91.59\% with 60 labeled nodes per class on DBLP,
while \name~is 93.49\% and 94.13\%.
These results show the effectiveness of \name.
While \name\ performs slightly worse than Heco in Macro-F1 and Micro-F1 on Aminer, 
it outperforms Heco in the AUC scores.
% The AUC score of \name~
% achieves the best performance among all methods on Aminer,
% but is slightly worse than Heco in the other two scores.
This can be explained by the label imbalance on Aminer.
Specifically,
the number of objects in the label which has 
the maximum number of nodes
is $\sim 7$ times more than
that in the label which has the minimum number of nodes.
It is well known that
when labeled objects are imbalanced,
AUC is a more accurate metric than the other two.
This further verifies that \name\ is effective.

\yjx{
\nameplus\ achieves the best performance in most 36 cases.
On the basis of \name,
\nameplus\ has been improved on each dataset,
especially IMDB.
In the IMDB dataset, with 20 labeled nodes per class, the Micro-F1 score of \nameplus\ is 62.91\% and the Macro-F1 score is 63.13\% while the runners-up scores are only 56.89\% and 57.01\%. 
This can demonstrate that adaptive weights have stronger learning capability on datasets with more noise.
}

\subsection{Node Clustering}
We further perform K-means clustering to verify the quality of learned node embeddings.
We adopt normalized mutual information (NMI) and adjusted rand index (ARI) as the evaluation metrics.
For both metrics,
the larger, the better.
The results are reported in Table \ref{clustering}.
As we can see,
% \name~outperforms all other methods significantly.
% In particular, 
on the ACM dataset,
\name~obtains about 16\% improvements on NMI and 25\% improvements on ARI  compared to the best of the benchmark methods, 
demonstrating the superiority of our model.
This is because 
the prototypical contrastive learning drives node representations to be more compact in the same cluster,
which helps boost node clustering.
\nameplus\ can further make the boundaries between classes more distinct, resulting in better performance than \name.

\begin{figure*}[t]
    \centering
    \includegraphics[width=0.9\textwidth]{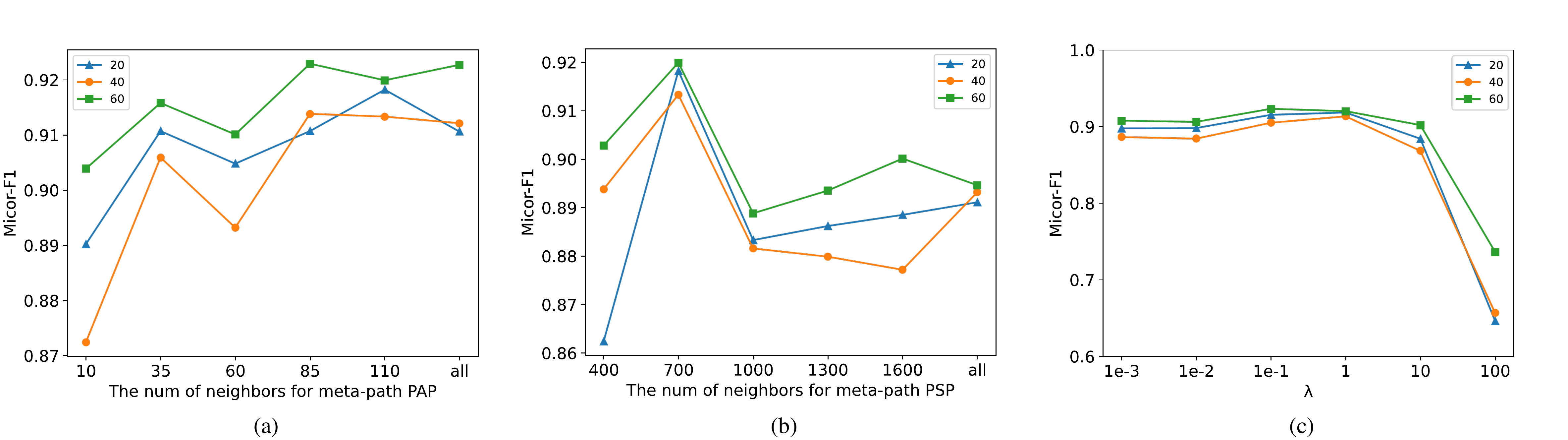}
    \caption{Hyper-parameter analysis on the ACM dataset. Here, $K$ is  the number of selected relevant neighbors w.r.t. a meta-path and $\lambda$ controls
the relative importance of two components of the loss function in Eq.~\ref{meow_loss}.}
    \label{fig:hyper}
\end{figure*}

\begin{figure*}[!t]
\centering
\subfloat[]{\includegraphics[width=2.0in]{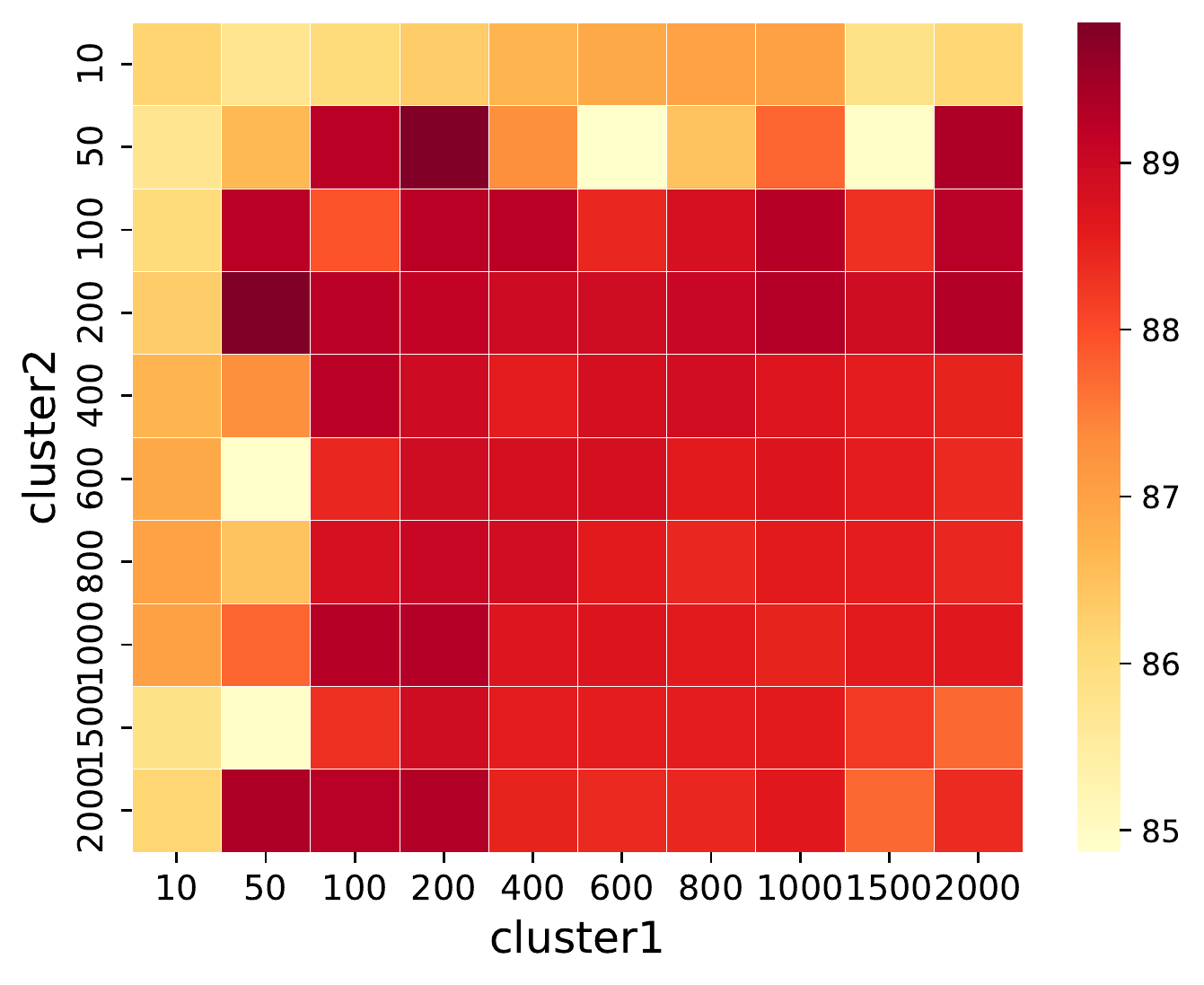}%
}
\hfil
\subfloat[]{\includegraphics[width=2.0in]{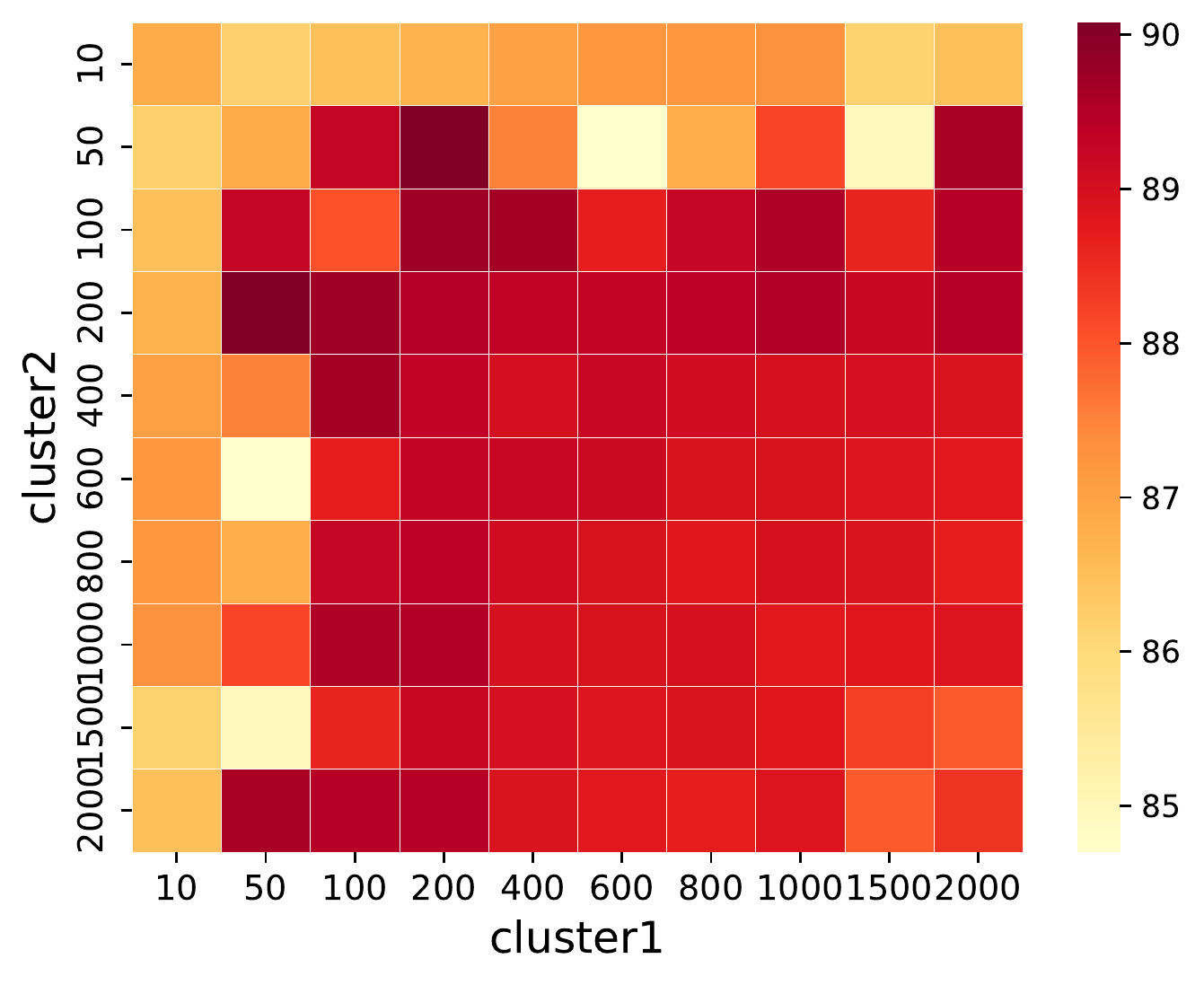}%
}
\hfil
\subfloat[]{\includegraphics[width=2.0in]{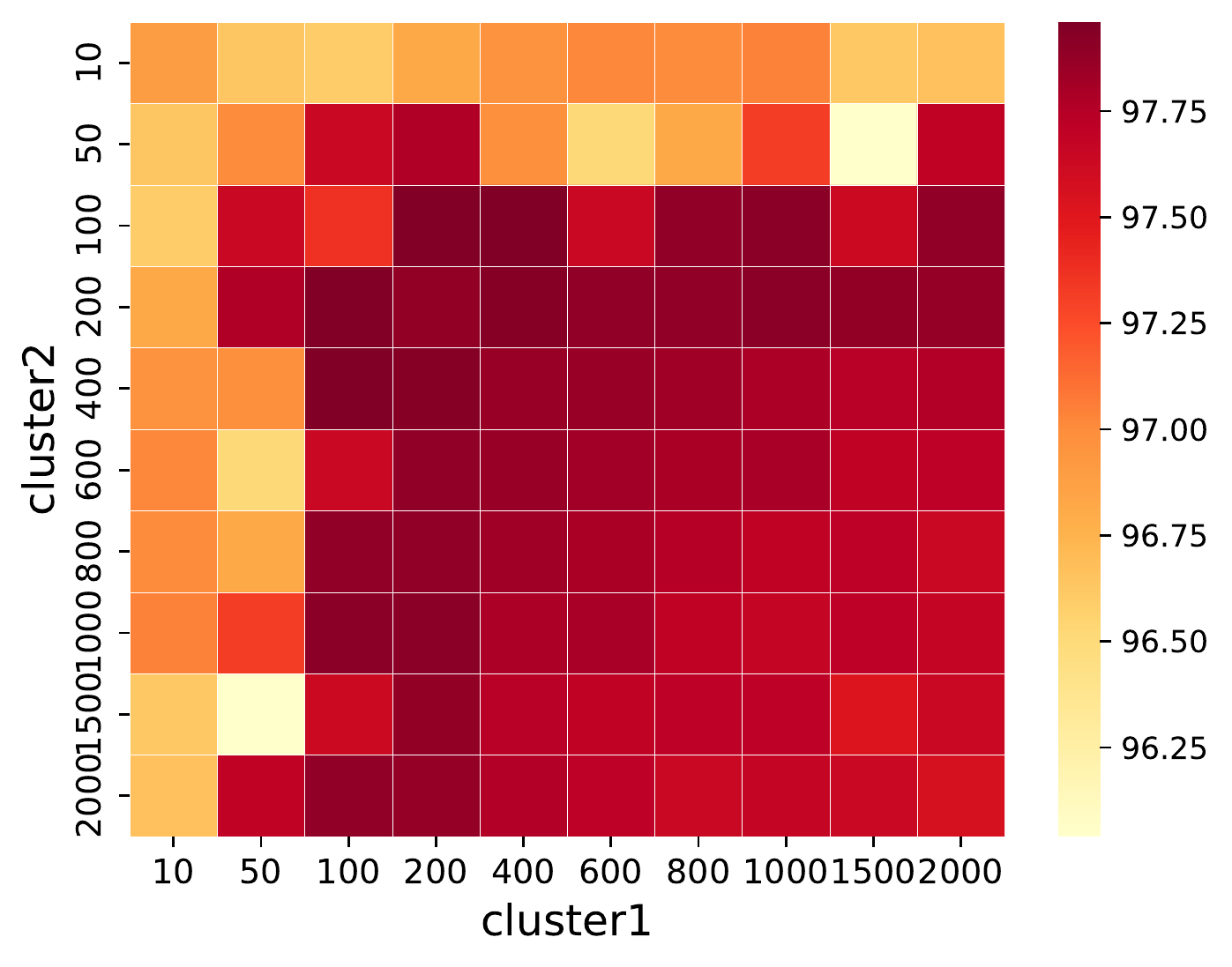}%
}
\caption{Hyper-parameter analysis (the number of clusters) of 40 labeled nodes per class on the ACM dataset. (a) Ma-F1 I. (b) Mi-F1. (c) AUC.}
\label{fig:hyper_cluster}
\end{figure*}

\subsection{Ablation Study}
We conduct an ablation study on \name~and \nameplus~to understand the characteristics of its main components.
To show the importance of the
prototypical contrastive learning regularization, 
we train the model with
$\mathcal{L}^{con}$ only and call this variant \textbf{\name\_wp} (\textbf{w}ithout \textbf{p}rototypical contrastive learning).
To demonstrate the importance of 
distinguishing negative samples with different characteristics, 
another variant is to not learn the weights of negative samples.
We call this \textbf{MEOW\_ww} (\textbf{w}ithout \textbf{w}eight).
Moreover,
we update nodes’ embeddings by aggregating information without considering meta-path contexts in the fine-grained view
and call this variant  \textbf{MEOW\_nc} (\textbf{n}o \textbf{c}ontext).
This helps us understand the importance of including meta-path contexts in heterogeneous graph contrastive learning.
We report the results of 40 labeled nodes per class, which is shown in Fig.\ref{fig:ablation_all}.
From these figures,
\name~achieves better performance than \name\_wp.
This is because
% due to 
the prototypical contrastive learning can drive nodes of the same label to be more compact in the latent space, 
which leads to better classification results.
\name~outperforms \name\_ww on three datasets.
This further demonstrates the advantage of weighted negative samples.
% The weights generated by clustering can help contrastive learning to distinguish between false negative samples and hard negative samples.
In addition, \name~beats \name\_nc in all cases.
This shows that when using meta-paths, the inclusion of meta-path contexts is essential for effective heterogeneous graph contrastive learning.
\yjx{
The \nameplus~model 
% with adaptive negative sample weights 
outperforms others in most cases.
This
demonstrates the significance of
capturing the characteristics of 
% focusing on the diversity of 
negative samples and the effectiveness of learning their weights adaptively.
% adjusting their repulsion forces.
}

\subsection{Hyper-parameter Analysis}
We further perform a sensitivity analysis on the hyper-parameters of our method. 
In particular, we study three main hyper-parameters in \name: the number of selected
relevant neighbor in \emph{Pathsim}
,
the relative importance $\lambda$ of the two components of the loss function in Eq. \ref{meow_loss} on the ACM dataset
and
the number of clusters mentioned in Eq.~\ref{eq:meow_con} and Eq.~\ref{eq:meow_pro_con}.
In our experiments, 
we vary one parameter each time with others fixed.
Figure~\ref{fig:hyper} 
illustrates the results of the first two hyperparameters with 20, 40, 60
labeled nodes per class w.r.t. the Micro-F1 scores. 
(Results on Macro-F1 and AUC scores exhibit similar trends, and thus are omitted for space
limitation.) 
Figure~\ref{fig:hyper_cluster} displays the Macro-f1, Micro-f1, and AUC scores with 40 labeled nodes per class for the number of clusters.
The diagonal represents the results for clustering once, while the off-diagonal entries represent the results for clustering twice.
From the figure,
we see that
% \begin{enumerate}
% \item 

1) \,
In the case of meta-path PAP \emph{(Paper-Author-Paper)}, 
the more neighbors selected, 
the better the performance of the model. 
However, for meta-path PSP \emph{(Paper-Subject-Paper)},
we find that the Micro-F1 score first rises and then drops, as the number of neighbors increases. 
This is because the co-authored papers are more likely to be in the same area, while 
papers in the same subject could be from different research domains.
With the increase of neighbors,
more noisy connections induced by PSP could degrade the model performance.
% This is because when capturing the most relevant neighbors, 
% papers are typically within the same area,
% but when too many neighbors are included, papers from different areas may be connected through the same subject. 
% In other words, there will be noise edges,
% which can capture unrelated information in the node representations.
% \item

2) \,
% \item 
For the weight $\lambda$ that controls the importance of the prototypical contrastive loss function,
\name~
gives very stable performances over a wide range of parameter values.
The Micro-F1 score largely decreases when $\lambda$ is large enough.
This is because
a larger $\lambda$ will encourage more compactness within each class. 
However, 
this may cause some hard samples to be assigned to the incorrect clusters 
and cannot be corrected during the training process, resulting in misclassification. 
% On the other hand, 
% the samples in DBLP may be relatively easier to distinguish, 
% which makes the model more stable.
% \end{enumerate}

3) \,
From the heat map,
we can observe that when the clustering size is small, 
samples from different classes may mix within the same cluster,
which is detrimental to both contrastive loss and prototypical loss, 
resulting in poor model performance.
When the clustering size is too large, 
each node forms a cluster with its most similar node,
or even each node forms an individual cluster. 
In this case, 
the prototypical loss and contrastive loss become similar,
resulting in the weighted contrastive loss not having a significant impact and causing a slight decrease in performance.

\subsection{Case study}

\begin{table}
\centering
  \caption{Case study on different types of weights on the ACM dataset.}
  \label{weight}
  \resizebox{0.95\columnwidth}{!}{
  \begin{tabular}{c|ccc|ccc}
    \hline
    Metric & \multicolumn{3}{c|}{Ma-F1} & \multicolumn{3}{c}{Mif1}  \\
    \hline
    Split & 20 & 40 & 60 & 20 & 40 & 60 \\

    \hline
NW & 91.00 & 90.68 & 91.89 & 91.31 &  90.93 & 91.88  \\
RW & 91.43 & 90.64 & 92.49 & 91.54 & 90.94 & 92.43 \\
AW & 93.56 & 92.49 & 93.52 & 93.48 & 92.49 & 93.37  \\

    \hline
  \end{tabular}}
\end{table}

\begin{figure}[t]
\centering
\includegraphics[width=0.4\textwidth]{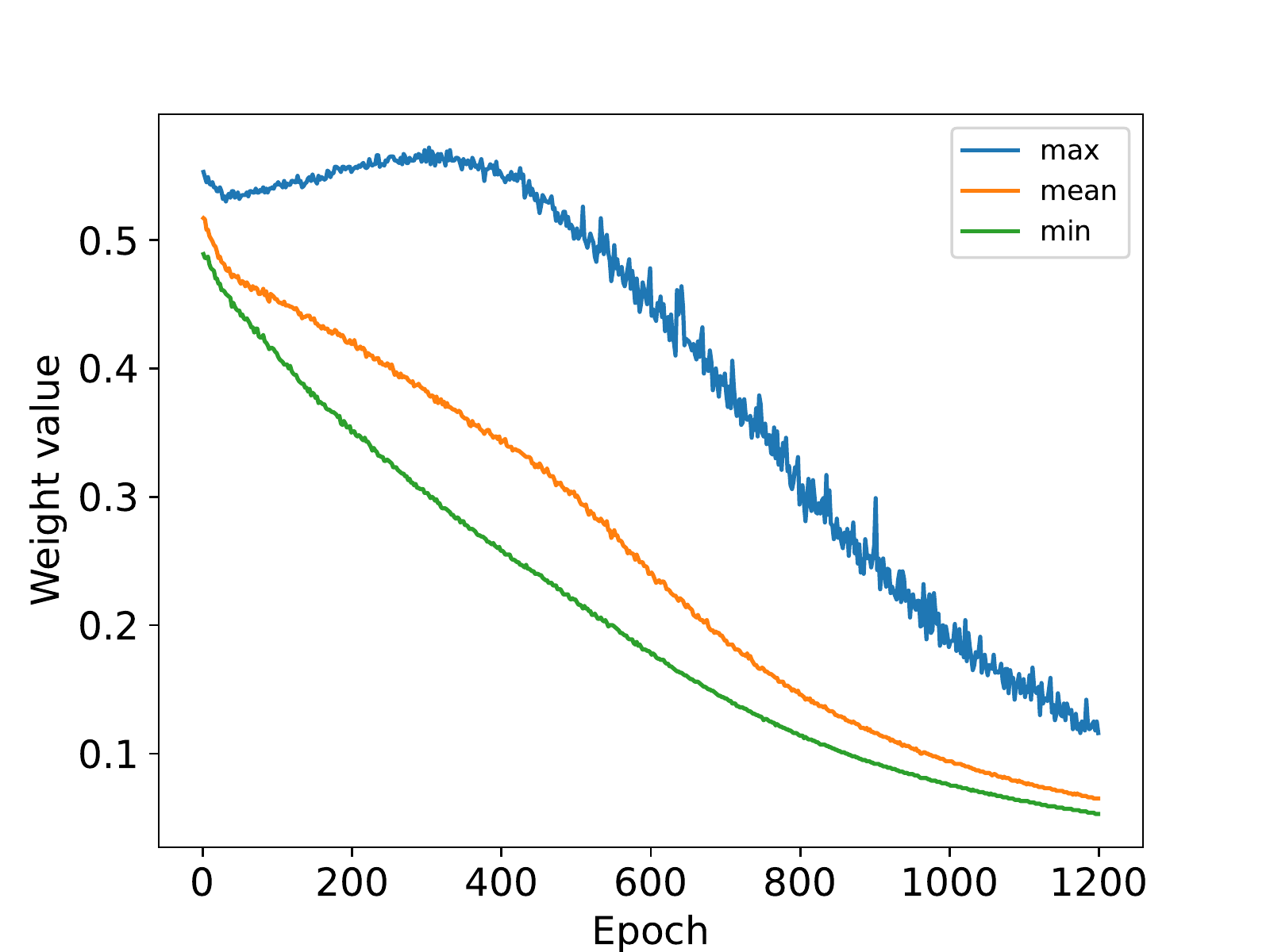}
    \caption{Dynamic changes in weights during training on the ACM dataset.}
    \label{fig:weight_var}
\end{figure}

In this section, 
we analyze the learned weights in Equation~\ref{eq:ada} through experiments.
First,
we evaluate the advantages of weighted InfoNCE using three cases:
\textbf{NW}(\textbf{n}o \textbf{w}eight), \textbf{RW}(\textbf{r}andom \textbf{w}eights), and \textbf{AW}(\textbf{a}daptive \textbf{w}eights). 
NW means all weights are set to 1, 
which is equivalent to regular InfoNCE. 
RW means we randomly assign a weight between 0 and 1 to each node pair in each epoch. 
AW refers to our proposed variant model~\nameplus.
As can be seen in Table~\ref{weight}, 
we can observe that compared to NW, 
using random weights leads to some improvement in results.
This is because different weight assignments of node pairs can influence the optimization direction of the model.
However, 
compared to AW, 
RW lacks stability during training and does not consider the characteristics of the node pairs.
Therefore, the results obtained with adaptive weights outperform the other two cases, demonstrating better performance.
Second,
in Figure~\ref{fig:weight_var}
we take the maximum, mean, and minimum values of the weights for all node pairs to reflect the dynamic changes of the weights during the training process.
We can observe that while there are some fluctuations in the weight changes, overall they exhibit a stable decreasing trend.
This is because as the training progresses, the nodes in the latent space gradually acquire more discriminative representations, requiring only small gradient values for fine-tuning.

Finally,
after training on the ACM dataset for 500 epochs, 
we randomly select an anchor and show the learned weights for its negative samples in Figure~\ref{sim_weight}. 
We can see that the overall trend of the weights is consistent with our expectations, 
as $\widetilde{\gamma_{ij}}$ can adaptively adjust its magnitude based on the characteristics of the samples. 
For false negative samples with high similarity, $\widetilde{\gamma_{ij}}$ is relatively small to ensure that they are not pushed away from the anchor.
For hard negative samples, $\widetilde{\gamma_{ij}}$ is expected to be larger, so that the anchor and hard negative samples can be distinguished. 
% Easy samples are far away from the anchor point, and the model is not sensitive to γij in a large numerical range.
Further,
based on Equation~\ref{eq:theorem2},
for easy negative samples that have small similarity with the anchor, 
they can be pushed farther only when 
the weight $\widetilde{\gamma_{ij}}$ is set large.
% which is consistent with our observations in Figure~\ref{sim_weight}.
All these results explain the reason why our experiment works better than other baselines.

\begin{figure}
    \centering
    \includegraphics[width=0.4\textwidth]{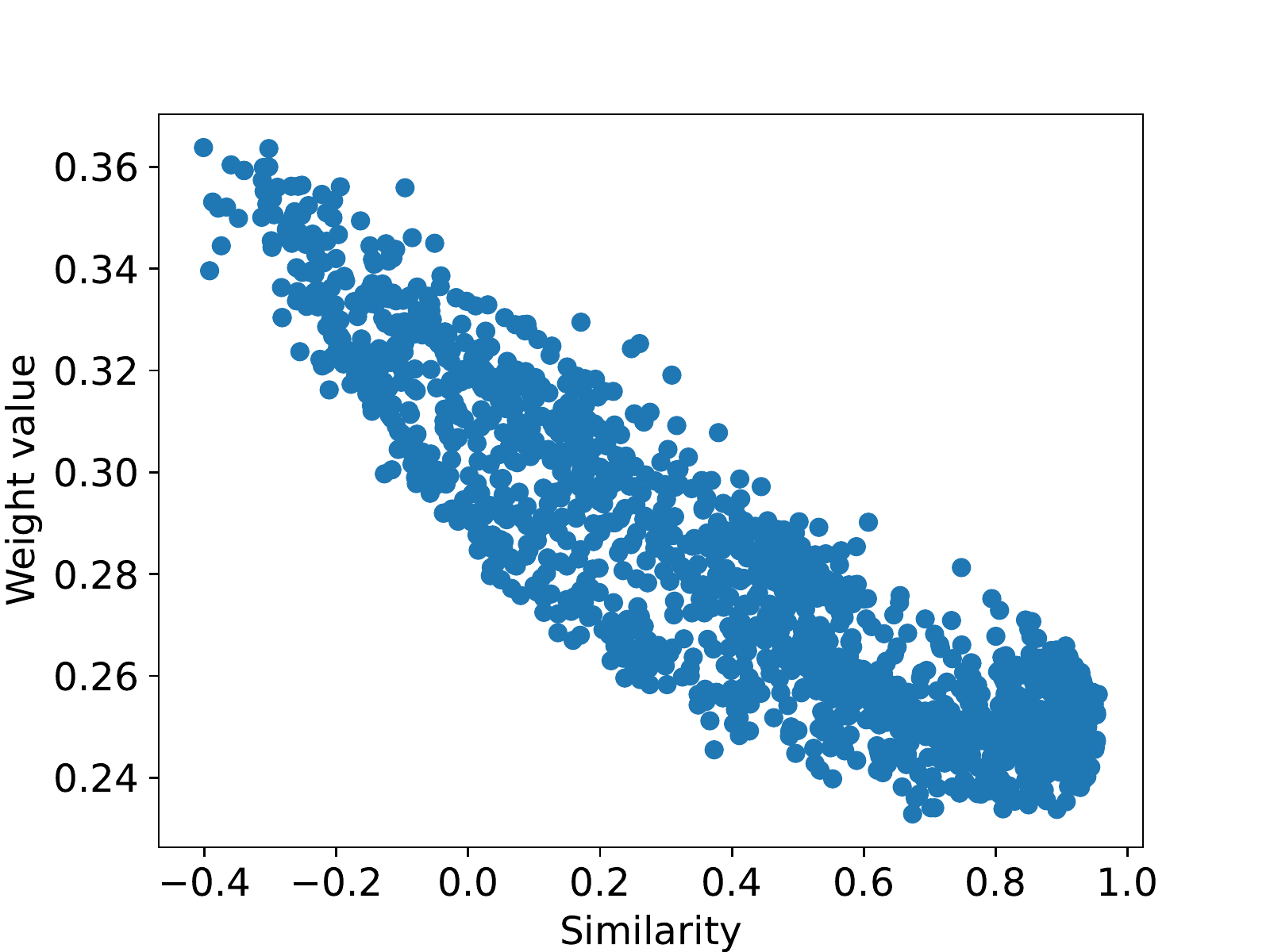}
    \caption{
    The relationship between similarity values and learned weights $\widetilde{\gamma_{ij}}$ of negative samples in Equation~\ref{eq:ada} for a randomly selected anchor after training 500 epochs on the ACM dataset.
    }
    \label{sim_weight}
  \end{figure}

\section{Conclusion}

We studied graph contrastive learning in HINs and proposed the \name\ model, 
% a heterogeneous graph contrastive learning model that 
which considers both meta-path contexts and weighted negative samples.
Specifically, \name\ constructs a coarse view and a fine-grained view for contrast.
In the coarse view, 
we took node embeddings derived by directly aggregating all the meta-paths as anchors, 
while in the fine-grained view, 
we utilized meta-path contexts
and constructed positive and negative samples for anchors.
Afterwards, 
we conducted a theoretical analysis of the InfoNCE loss and recognized its limitations for negative sample gradient magnitudes. Therefore, we proposed a weighted loss function for negative samples.
In \name~,
we distinguished hard negatives from false ones by performing node clustering and 
using the results to 
assign weights to negative samples.
Additionally, 
we introduced prototypical contrastive learning, which helps learn compact embeddings of nodes in the same cluster.
Further, 
we proposed a variant model called~\nameplus~which can adaptively learn soft-valued weights for negative samples instead of hard-valued weights in~\name. 
Finally, we conducted extensive experiments to show the superiority of~\name~and~\nameplus~against other SOTA methods.

\section*{Acknowledgments}
This work is supported by National Natural Science Foundation of China No. 62202172 and Shanghai Science and Technology Committee General Program No. 22ZR1419900.

%{\appendices
%\section*{Proof of the First Zonklar Equation}
%Appendix one text goes here.
% You can choose not to have a title for an appendix if you want by leaving the argument blank
%\section*{Proof of the Second Zonklar Equation}
%Appendix two text goes here.}

\bibliographystyle{IEEEtran}
\bibliography{IEEEabrv,myrefs}

% \newpage

\vspace{11pt}

\begin{IEEEbiography}
[{\includegraphics[width=1in,height=1.25in,clip,keepaspectratio]{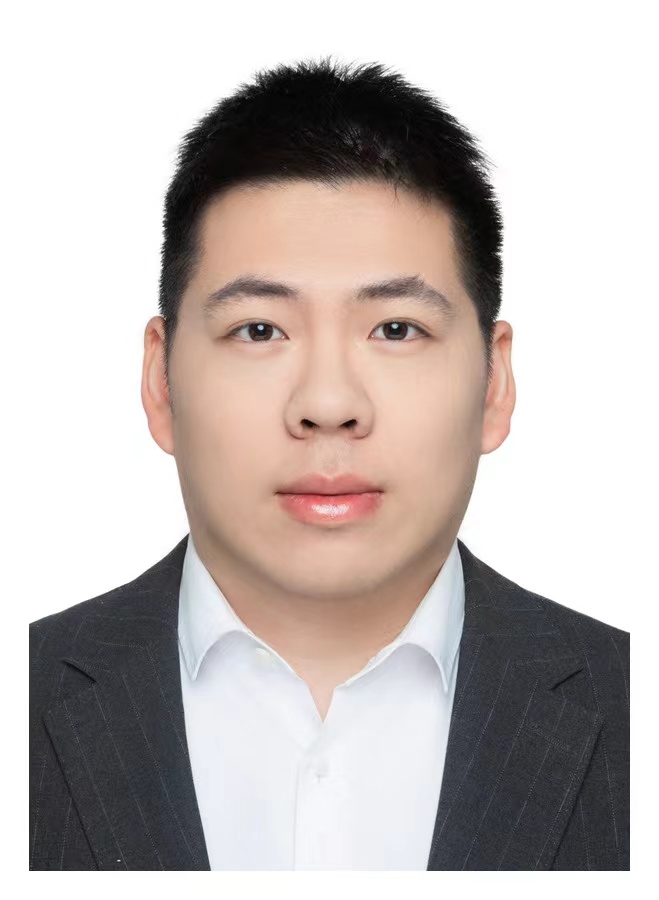}}]{Jianxiang Yu}
is currently working toward the MS degree at the School of Data Science and Engineering in East China Normal University.
His research interests include graph neural networks and data mining.
\end{IEEEbiography}

\begin{IEEEbiography}
[{\includegraphics[width=1in,height=1.25in,clip,keepaspectratio]{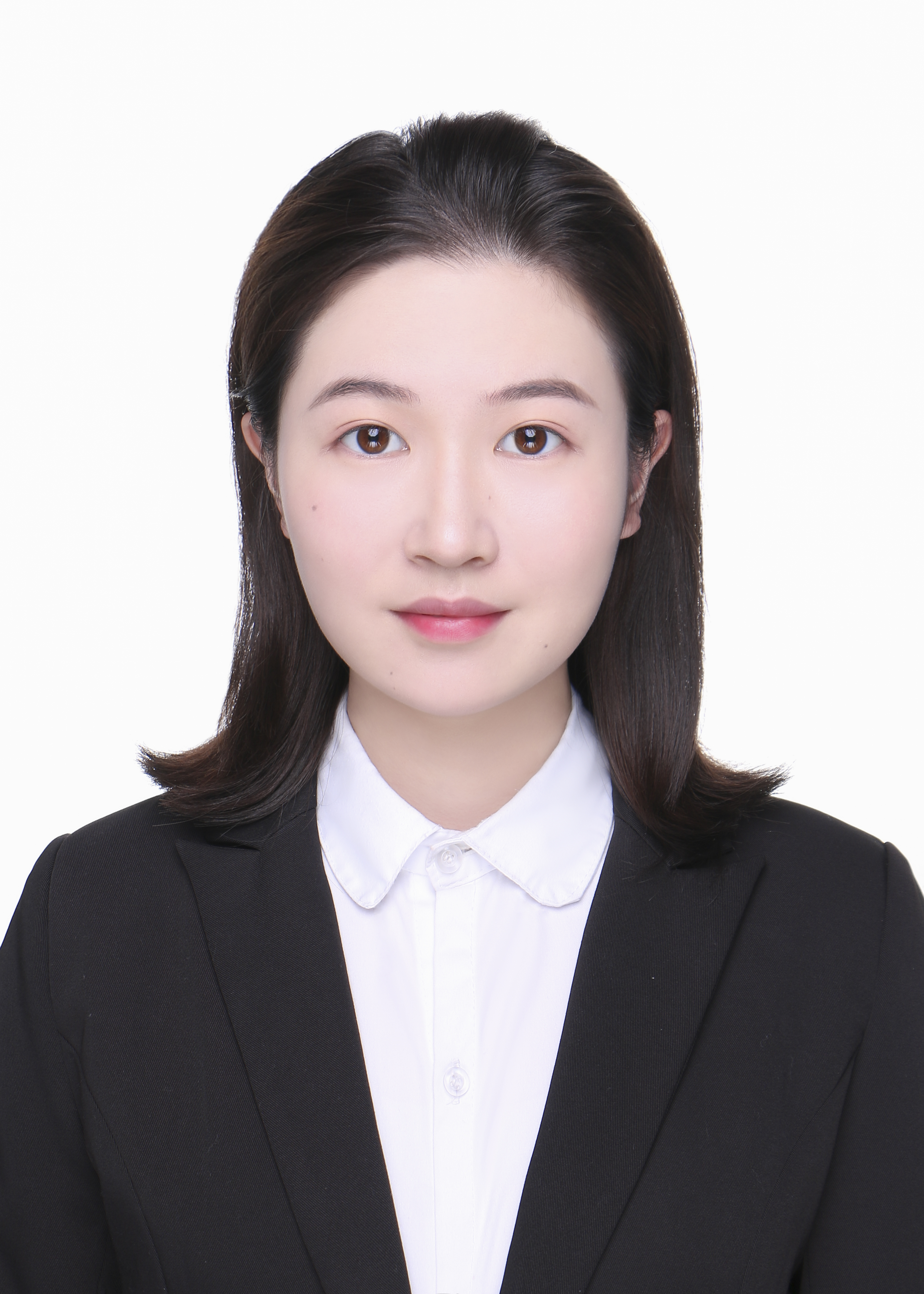}}]
{Qingqing Ge}
is currently working toward the MS degree at the School of Data Science and Engineering in East China Normal University.
Her general research interests include weakly supervised graph neural networks and prompt learning.
\end{IEEEbiography}

\begin{IEEEbiography}[{\includegraphics[width=1in,height=1.25in,clip,keepaspectratio]{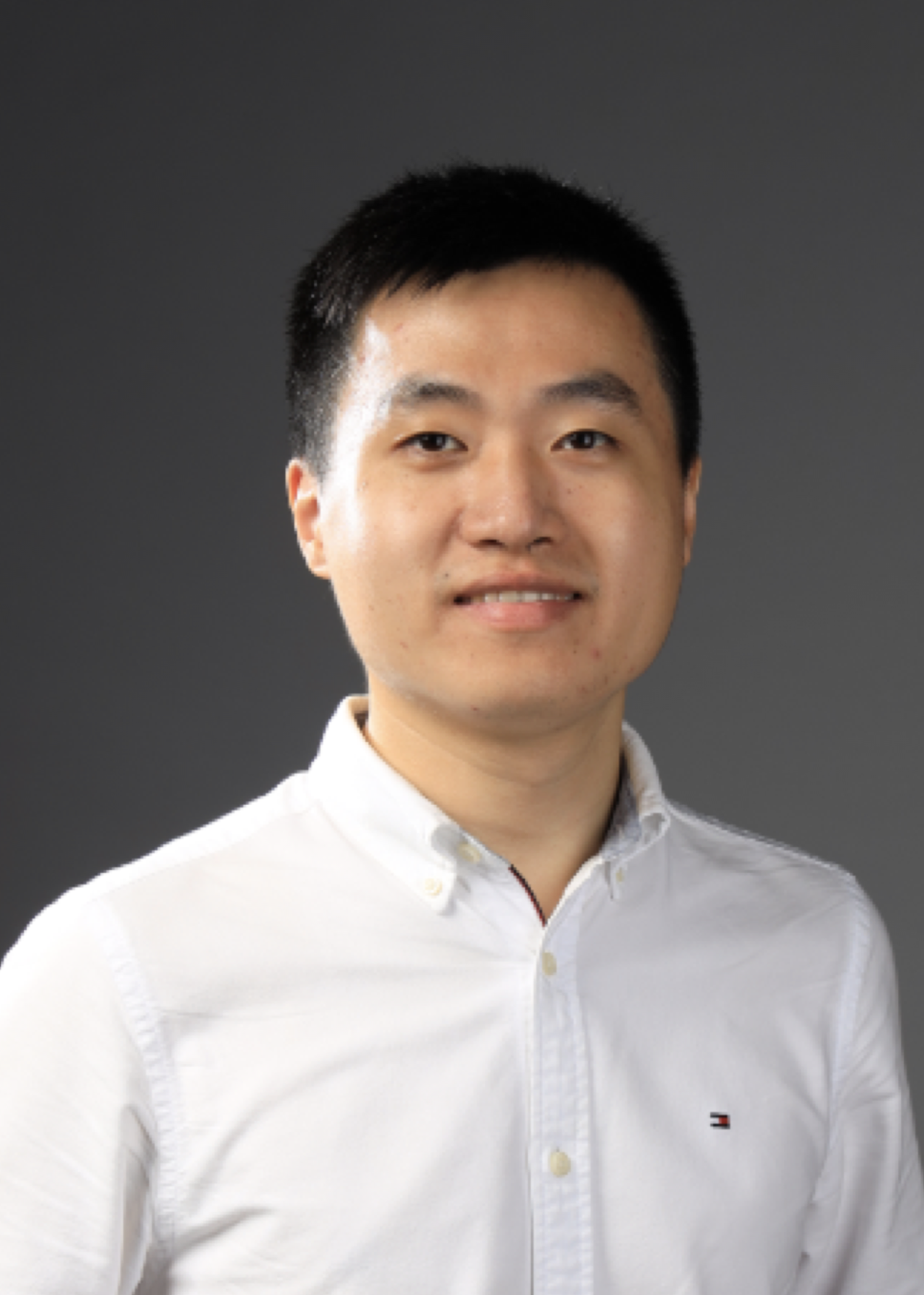}}]{Xiang~Li} received his Ph.D. degree from the University of Hong Kong in 2018. From 2018 to 2020, 
he worked as a research scientist in the Data Science Lab at JD.com and a research associate at The University of Hong Kong, respectively. 
He is currently a research professor at the School of Data Science and Engineering in East China Normal University. 
His general research interests include data mining and machine learning applications.
\end{IEEEbiography}

\begin{IEEEbiography}[{\includegraphics[width=1in,height=1.25in,clip,keepaspectratio]{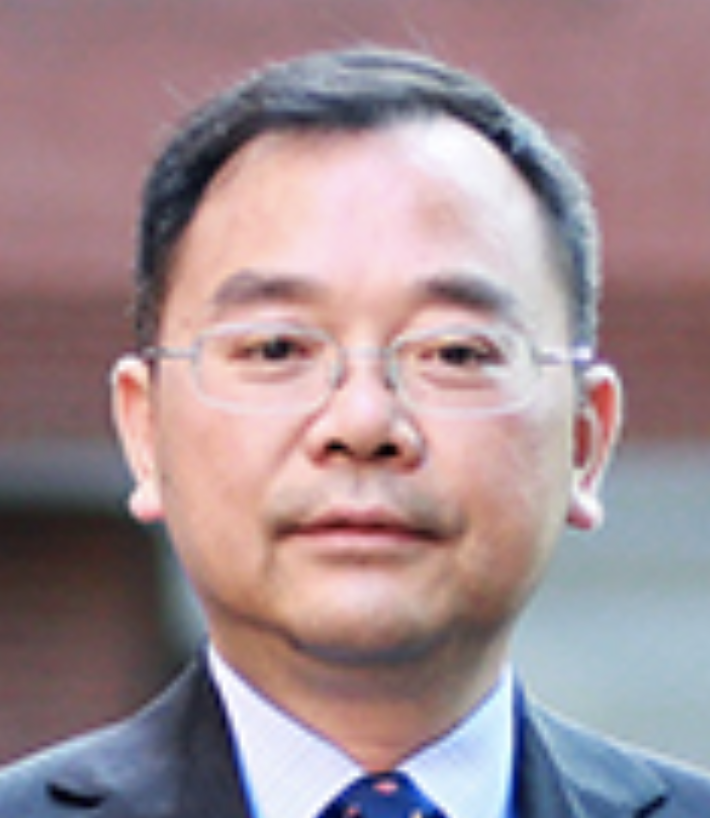}}]{Aoying~Zhou} is a professor at the School of Data Science and Engineering with East China Normal University (ECNU), where he is also the Vice Chancellor of the University.
% heading the School of Data Science and Engineering. 
Before joining ECNU in 2008, he worked for Fudan University at the Computer Science Department for 15 years. He is the winner of the National Science Fund for Distinguished Young Scholars supported by NSFC and the professorship appointment under Changjiang Scholars Program of Ministry of Education. 
He acted as a vice-director of ACM SIGMOD China and Database Technology Committee of China Computer Federation. 
He is serving as a member of the editorial boards, including the VLDB Journal, the WWW Journal, and etc. 
His research interests include data management, in-memory cluster computing, big data benchmarking, and performance optimization.
\end{IEEEbiography}

\vfill

\end{document}